%%
%% This is file `sample-sigconf.tex',
%% generated with the docstrip utility.
%%
%% The original source files were:
%%
%% samples.dtx  (with options: `all,proceedings,bibtex,sigconf')
%% 
%% IMPORTANT NOTICE:
%% 
%% For the copyright see the source file.
%% 
%% Any modified versions of this file must be renamed
%% with new filenames distinct from sample-sigconf.tex.
%% 
%% For distribution of the original source see the terms
%% for copying and modification in the file samples.dtx.
%% 
%% This generated file may be distributed as long as the
%% original source files, as listed above, are part of the
%% same distribution. (The sources need not necessarily be
%% in the same archive or directory.)
%%
%%
%% Commands for TeXCount
%TC:macro \cite [option:text,text]
%TC:macro \citep [option:text,text]
%TC:macro \citet [option:text,text]
%TC:envir table 0 1
%TC:envir table* 0 1
%TC:envir tabular [ignore] word
%TC:envir displaymath 0 word
%TC:envir math 0 word
%TC:envir comment 0 0
%%
%%
%% The first command in your LaTeX source must be the \documentclass
%% command.
%%
%% For submission and review of your manuscript please change the
%% command to \documentclass[manuscript, screen, review]{acmart}.
%%
%% When submitting camera ready or to TAPS, please change the command
%% to \documentclass[sigconf]{acmart} or whichever template is required
%% for your publication.
%%
%%
\documentclass[sigconf]{acmart}

\usepackage{amsthm}
\usepackage{amsmath}
\usepackage{algorithm}
\usepackage{enumitem}
\usepackage{picinpar}
\usepackage{lineno}
\usepackage{graphicx}

\usepackage{algorithmicx}
\usepackage[noend]{algpseudocode}
\usepackage{subfigure}
\usepackage{multirow}
\usepackage{color}
\usepackage{balance}
\usepackage{enumitem}
\usepackage{hhline}
\usepackage[normalem]{ulem}
\usepackage{booktabs}
\usepackage{wrapfig}
\usepackage{cancel}
\usepackage{hyperref}
\usepackage{makecell}

\newtheorem{theorem}{Theorem}

\newtheorem{definition}{Definition}

\newcommand{\bL}{\ensuremath{\mathcal{L}}}

\newcommand{\bN}{\ensuremath{\mathcal{N}}}

\renewcommand{\vec}[1]{\ensuremath{\mathbf{#1}}}

\newcommand{\stitle}[1]{\vspace{1mm} \noindent {\bf #1}}

\newcommand{\ie}{{\it i.e.}}

\newcommand{\method}[1]{\textsc{#1}}
\newcommand{\model}{\method{ProNoG}{}}

\newcommand{\eat}[1]{}

\newcommand{\stkout}[1]{\ifmmode\text{\sout{\ensuremath{#1}}}\else\sout{#1}\fi}

%%
%% \BibTeX command to typeset BibTeX logo in the docs
\AtBeginDocument{%
  }

%% Rights management information.  This information is sent to you
%% when you complete the rights form.  These commands have SAMPLE
%% values in them; it is your responsibility as an author to replace
%% the commands and values with those provided to you when you
%% complete the rights form.
\copyrightyear{2025}
\acmYear{2025}
\setcopyright{acmlicensed}
\acmConference[KDD '25] {Proceedings of the 31st ACM SIGKDD Conference on Knowledge Discovery and Data Mining V.1}{August 3--7, 2025}{Toronto, ON, Canada.}
\acmBooktitle{Proceedings of the 31st ACM SIGKDD Conference on Knowledge Discovery and Data Mining V.1 (KDD '25), August 3--7, 2025, Toronto, ON, Canada}
\acmDOI{10.1145/3690624.3709219}
\acmISBN{979-8-4007-1245-6/25/08}

%%
%% Submission ID.
%% Use this when submitting an article to a sponsored event. You'll
%% receive a unique submission ID from the organizers
%% of the event, and this ID should be used as the parameter to this command.
%%\acmSubmissionID{123-A56-BU3}

%%
%% For managing citations, it is recommended to use bibliography
%% files in BibTeX format.
%%
%% You can then either use BibTeX with the ACM-Reference-Format style,
%% or BibLaTeX with the acmnumeric or acmauthoryear sytles, that include
%% support for advanced citation of software artefact from the
%% biblatex-software package, also separately available on CTAN.
%%
%% Look at the sample-*-biblatex.tex files for templates showcasing
%% the biblatex styles.
%%

%%
%% The majority of ACM publications use numbered citations and
%% references.  The command \citestyle{authoryear} switches to the
%% "author year" style.
%%
%% If you are preparing content for an event
%% sponsored by ACM SIGGRAPH, you must use the "author year" style of
%% citations and references.
%% Uncommenting
%% the next command will enable that style.
%%\citestyle{acmauthoryear}

%%
%% end of the preamble, start of the body of the document source.
\begin{document}

%%
%% The "title" command has an optional parameter,
%% allowing the author to define a "short title" to be used in page headers.
%\title{Beyond Homophily and Heterophily: Non-Homophilic Graph Pre-Training and Prompt Learning}
\title{Non-Homophilic Graph Pre-Training and Prompt Learning}

%%
%% The "author" command and its associated commands are used to define
%% the authors and their affiliations.
%% Of note is the shared affiliation of the first two authors, and the
%% "authornote" and "authornotemark" commands
%% used to denote shared contribution to the research.
\author{Xingtong Yu$^{*\ddagger}$}
\affiliation{%
 \institution{Singapore Management University}
  \country{Singapore}}
\email{xingtongyu@smu.edu.sg}

\author{Jie Zhang$^*$}
\affiliation{%
 \institution{National University of Singapore}
  \country{Singapore}}
\email{jiezhang_jz@u.nus.edu}

\author{Yuan Fang$^{\dagger}$}
\affiliation{%
  \institution{Singapore Management University}
  \country{Singapore}}
\email{yfang@smu.edu.sg}

\author{Renhe Jiang$^{\dagger}$}
\affiliation{%
  \institution{The University of Tokyo}
  \country{Japan}}
\email{jiangrh@csis.u-tokyo.ac.jp}

\thanks{
    $^*$Co-first authors. $^{\ddagger}$Part of the work was done at the University of Tokyo.\\
    $^{\dagger}$Corresponding authors.
}

%%
%% By default, the full list of authors will be used in the page
%% headers. Often, this list is too long, and will overlap
%% other information printed in the page headers. This command allows
%% the author to define a more concise list
%% of authors' names for this purpose.

\renewcommand{\shortauthors}{Xingtong Yu, Jie Zhang, Yuan Fang, and Renhe Jiang}

%%
%% The abstract is a short summary of the work to be presented in the
%% article.
\begin{abstract}
Graphs are ubiquitous for modeling complex relationships between objects across various fields. Graph neural networks (GNNs) have become a mainstream technique for graph-based applications, but their performance heavily relies on abundant labeled data. To reduce labeling requirement, pre-training and 
prompt learning has become a popular alternative. However, most existing prompt methods do not distinguish between homophilic and heterophilic characteristics in graphs.  
In particular, many real-world graphs are \emph{non-homophilic}---neither strictly nor uniformly homophilic---as they exhibit varying homophilic and heterophilic patterns across graphs and nodes.
In this paper, we propose \model, a novel pre-training and prompt learning framework for such non-homophilic graphs. First, we examine existing graph pre-training methods, providing insights into the choice of pre-training tasks. Second, recognizing that each node exhibits unique non-homophilic characteristics, we propose a conditional network to characterize node-specific patterns in downstream tasks. Finally, we thoroughly evaluate and analyze \model\ through extensive experiments on ten public datasets. 
% Codes are available at \url{https://github.com/Jaygagaga/ProNoG/tree/main}.
\end{abstract}

%%
%% The code below is generated by the tool at http://dl.acm.org/ccs.cfm.
%% Please copy and paste the code instead of the example below.
%%
\begin{CCSXML}
<ccs2012>
   <concept>
       <concept_id>10002951.10003227.10003351</concept_id>
       <concept_desc>Information systems~Data mining</concept_desc>
       <concept_significance>500</concept_significance>
       </concept>
   <concept>
       <concept_id>10010147.10010257</concept_id>
       <concept_desc>Computing methodologies~Machine learning</concept_desc>
       <concept_significance>500</concept_significance>
       </concept>
 </ccs2012>
\end{CCSXML}

\ccsdesc[500]{Information systems~Data mining}
\ccsdesc[500]{Computing methodologies~Machine learning}

%%
%% Keywords. The author(s) should pick words that accurately describe
%% the work being presented. Separate the keywords with commas.
\keywords{Non-homophilic graph, prompt learning, graph pre-training}

%% A "teaser" image appears between the author and affiliation
%% information and the body of the document, and typically spans the
%% page.
% \begin{teaserfigure}
%   \includegraphics[width=\textwidth]{sampleteaser}
%   \caption{Seattle Mariners at Spring Training, 2010.}
%   \Description{Enjoying the baseball game from the third-base
%   seats. Ichiro Suzuki preparing to bat.}
%   \label{fig:teaser}
% \end{teaserfigure}

% \received{20 February 2007}
% \received[revised]{12 March 2009}
% \received[accepted]{5 June 2009}

%%
%% This command processes the author and affiliation and title
%% information and builds the first part of the formatted document.
\maketitle

\section{Introduction}
Graph data are pervasive in real-world applications, such as citation networks%\cite{kanakia2019scalable,xiong2017explicit}
, social networks%\cite{ji2023community,zhang2023capacity}
, transportation systems, and molecular graphs%\cite{wang2023automated,lee2023shift}
. Traditional methods typically train graph neural networks (GNNs) \cite{kipf2016semi,velivckovic2017graph} or graph transformers \cite{ying2021transformers,yun2019graph} in a supervised manner. However, they require substantial labeled data and re-training for each specific task.

To mitigate the limitations of supervised methods, pre-training methods have gained significant traction \cite{hu2020gpt,velickovic2019deep,you2020graph}. They first learn universal, task-independent properties from unlabeled graphs, and then fine-tune the pre-trained models to various downstream tasks using task-specific labels \cite{velickovic2019deep,you2020graph}. However, a significant gap occurs between the pre-training objectives and downstream tasks, resulting in suboptimal performance \cite{yu2024few,sun2023graph}. Moreover, fine-tuning large pre-trained models is costly and still requires sufficient task-specific labels to prevent overfitting.
As an alternative to fine-tuning, prompt learning has emerged as a popular parameter-efficient technique for adaptation to downstream tasks \cite{sun2022gppt,liu2023graphprompt,fang2024universal,yu2024generalized}. They first utilize a universal template to unify pre-training and downstream tasks. 
Then, a learnable prompt is employed to modify the input features or hidden embeddings of the pre-trained model to align with the downstream task without updating the pre-trained weights. Since a prompt has far fewer parameters than the pre-trained model, prompt learning can be especially effective in low-resource settings \cite{yu2024few}.

However, current graph ``pre-train, prompt'' approaches rely on the homophily assumption %\cite{sun2022gppt,liu2023graphprompt,yu2023hgprompt}, 
or overlook the presence of heterophilic edges. 
%\cite{sun2023graph,tan2023virtual,yu2024generalized,fang2024universal}.
Specifically, the homophily assumption \cite{ma2021homophily,zhu2020beyond} states that neighboring nodes 
should share the same labels, whereas heterophily refers to the opposite scenario where two neighboring nodes have different labels. We observe that real-world graphs are typically \emph{non-homophilic}, meaning they are \emph{neither strictly nor uniformly homophilic} and mix \emph{both homophilic and heterophilic patterns} \cite{xiao2022decoupled,xiao2024simple}.
In this work, we investigate the pre-training and prompt learning methodology for non-homophilic graphs. We first revisit existing graph pre-training methods, and then propose a \textbf{Pro}mpt learning framework for \textbf{N}on-h\textbf{o}mophilic \textbf{G}raphs (or \model\ in short). The solution is non-trivial,
as the notion of homophily encompasses two key aspects, each with its own unique challenge.  

First, different graphs exhibit varying degrees of non-homophily.
As shown in Fig.~\ref{fig.intro-motivation}(a), the \textit{Cora} citation network is typically considered largely homophilic with 81\% homophilic edges\footnote{Defined as edges connecting two nodes of the same label; see Eq.~\eqref{eq.graph-level-edge-homophily-ratio} in Sect.~\ref{sec.preliminaries}.}, whereas the \textit{Wisconsin} web graph links different kinds of webpages, which is highly heterophilic with only 21\% homophilic edges.  
Moreover, the non-homophilic characteristics of a graph also depends on the target label. For example, in a dating network shown in Fig.~\ref{fig.intro-motivation}(b), taking gender as the node label, the graph is more heterophilic with $2/7$ homophilic edges. % low homophily ratio since users (nodes) tend to link to the opposite gender. 
However, taking hobbies as the node label, the graph becomes more homophilic with $4/7$ homophilic edges. 
%ratio since users with the same hobby prefer to connect, as shown in Fig.~\ref{fig.intro-motivation}(b).
Hence, \textit{how do we pre-train a graph model irrespective of the graph's homophily characteristics?} 
In this work, we propose definitions for \emph{homophily tasks} and \emph{homophily samples}. We show that 
pre-training with non-homophily samples increases the loss of any homophily task. Meanwhile, a less homophilic graph results in a higher number of non-homophily samples, subsequently increasing the pre-training loss for homophily tasks. This motivates us to move away from homophily tasks for graph pre-training \cite{sun2022gppt,liu2023graphprompt} and instead choose a non-homophily task \cite{you2020graph,xiao2022decoupled}. 

\begin{figure}[t]
\centering
\includegraphics[width=1\linewidth]{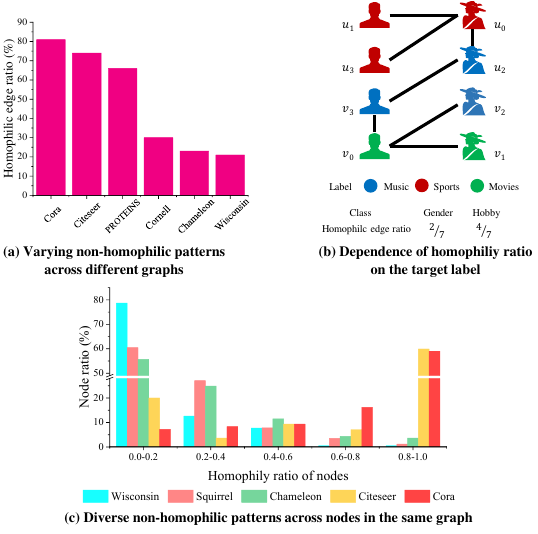}
\vspace{-6mm}
\caption{Non-homophilic characteristics of graphs.}
\label{fig.intro-motivation}
\end{figure}

Second, different nodes within the same graph are distributed differently in terms of their non-homophilic characteristics. As shown in Fig.~\ref{fig.intro-motivation}(c), 
%figure 1c 增加数据集
%on both \textit{Cora} and \textit{Cornell},
on each dataset,
different nodes within the same graph generally exhibit diverse homophily ratios\footnote{Defined as the proportion of a node's neighbors that share the same label as the node; refer to Eq.~\eqref{eq.node-homophily-ratio} in Sect.~\ref{sec.preliminaries}.}. Hence, \textit{how do we capture the fine-grained, node-specific non-homophilic characteristics?} 
%In non-homophilic graphs, different nodes generally exhibit various non-homophilic patterns. Specifically, different nodes typically have varying homophily ratios, as shown in Fig.~\ref{fig.intro-motivation}(c). %These varying homophily ratios reflect different characteristics of a node's neighborhood, indicating distinct topological knowledge. 
%Moreover, even nodes with the same homophily ratio may have different neighborhood distributions, as shown in Fig.~\ref{fig.intro-motivation}(b). For example, nodes $v_o$ and $u_0$ share the same homophily ratio of 0.33. However, in the neighborhood of $v_0$, nodes $v_2$ and $v_3$ belong to the same class, whereas $u_2$ and $u_3$ in the neighborhood of $u_0$ have different labels. Besides, the distribution of similarities between neighboring nodes and the central node is also different. 
Due to the diverse characteristics across nodes, a one-size-fits-all solution for all nodes would be inadequate. 
%, each non-homophilic patterns should be characterized specifically to capture unique topological knowledge.
However, existing approaches generally apply a single prompt to all nodes \cite{sun2022gppt, liu2023graphprompt}, treating all nodes uniformly. Thus, these methods overlook the fine-grained node-wise non-homophilic characteristics, leading to suboptimal performance. 
%这段描述figure 1d accuracy去掉了，不能很好的支持我们的motivation。For example, 
%a standard graph prompt learning approach \cite{liu2023graphprompt} generally attains uneven performance when the homophily ratios of nodes vary, as shown in Fig.~\ref{fig.intro-motivation}(d), %\footnote{Here GraphPrompt \cite{liu2023graphprompt} uses a non-homophily pre-training method GraphCL \cite{you2020graph} to exclude the negative impact of unsuitable pre-training methods such as link prediction.}
%even with a non-homophily pre-training task \cite{you2020graph}. % regardless of using a homophilic or heterophilic backbone \cite{}.
Although some recent works \cite{tan2023virtual, chen2023ultra} have proposed node-specific prompts, they are not designed to account for the variation in nodes' non-homophilic charactersitics. Inspired by conditional prompt learning \cite{zhou2022conditional}, we propose generating a unique prompt from each node with a conditional network (condition-net)  to capture the distinct characteristics of each node. We first capture the non-homophilic patterns of each node by reading out its multi-hop neighborhood. Then, conditioned on these non-homophilic patterns, the condition-net produces a series of prompts, one for each node that reflects its varying non-homophilic characteristics. These prompts can adjust the node embeddings to better align them with the downstream task.

In summary, the contributions of this work are threefold:
(1) We observe varying degrees of homophily across graphs, which motivates us to revisit graph pre-training tasks. We provide theoretical insights which guide us to choose non-homophily tasks for graph pre-training. 
%We propose the concept of strong homophily and non-homophily assumption, and theoretically prove that non-homophily pre-training methods learns prior knowledge irrespective of homophily ratio.
(2) We further observe that, within the same graph, different nodes have diverse distributions of non-homophilic characteristics. To adapt to the unique non-homophilic patterns of each node, we propose the \model\ framework for non-homophilic prompt learning, which is equipped with a condition-net to generate a series of prompts conditioned on each node. The node-specific prompts enables fine-grained, node-wise adaptation for the downstream tasks.
(3) We perform extensive experiments on ten benchmark datasets, demonstrating the superior performance of \model\ compared to a suite of state-of-the-art methods.
\section{Related Work}

In the following, we briefly review the literature on general and non-homophilic graph learning, as well as graph prompt learning.

\stitle{Graph representation learning.}
GNNs \cite{kipf2016semi, velivckovic2017graph} are mainstream approaches for graph representation learning. They typically operate on a message-passing framework, where nodes iteratively update their representations by aggregating messages received from their neighboring nodes \cite{xu2018powerful, hamilton2017inductive, yu2023learning}. However, their effectiveness relies on abundant task-specific labeled data and requires re-training for each task.
Inspired by the success of the pre-training paradigm in the language \cite{dong2019unified, brown2020language, gao2021making, schick2021s} and vision \cite{bao2022beit, zhou2022learning, zhou2022conditional, zang2022unified} domains, pre-training methods \cite{kipf2016variational, hu2020strategies, hu2020gpt, lu2021learning} have been widely explored for graphs. These methods first pre-train a graph encoder based on self-supervised tasks, and subsequently transfer the pre-trained prior knowledge to downstream tasks. However, these pre-training methods often make the homophily assumption, overlooking that real-world graphs are generally non-homophilic.

\stitle{Non-homophilic graph learning.}
Many GNNs \cite{ma2021homophily, zhu2021graphaaai, luan2022revisiting} have been proposed for non-homophilic graphs, employing techniques such as capturing high-frequency signals \cite{bo2021beyond}, discovering potential neighbors \cite{pei2020geom, jin2021node}, and high-order message passing \cite{zhu2020beyond}. Moreover, recent works have explored pre-training on non-homophilic graphs \cite{xiao2022decoupled, he2023contrastive, xiao2024simple} by capturing neighborhood information to construct unsupervised tasks for pre-training a graph encoder, and then transferring prior non-homophilic knowledge to downstream tasks through fine-tuning with task-specific supervision.
However, a significant gap exists between the objectives of pre-training and fine-tuning \cite{liu2023pre, yu2024few, sun2023graph}. While pre-training focuses on learning inherent graph properties without supervision, fine-tuning adapts these properties to downstream tasks based on task-specific supervision. This discrepancy hinders effective knowledge transfer and negatively impacts downstream performance.

\stitle{Graph prompt learning.}
Originally developed for the language domain, prompt learning effectively unifies pre-training and downstream objectives \cite{brown2020language,liu2021gpt,lester2021power}. Recently, graph prompt learning has emerged as a popular alternation to fine-tuning methods \cite{liu2023graphprompt,sun2022gppt,yu2023hgprompt,yu2024generalized}. These methods first propose a unified template, then design prompts specifically tailored to each downstream task, allowing them to better align with the pre-trained model while keeping the pre-trained parameters frozen. However, current graph prompt learning methods  \cite{yu2024few,sun2023all} typically do not consider the fact that real-world graphs are generally non-homophilic, exhibiting a mixture of diverse homophilic and heterophilic patterns across nodes. Hence, these methods usually apply a single prompt to all nodes, overlooking the unique non-homophilic pattern of each node.

\section{Preliminaries}\label{sec.preliminaries}
\stitle{Graph.}
A graph is defined as \( G = (V, E) \), where \( V \) represents the set of nodes and \( E \) represents the set of edges. The nodes are also associated with a feature matrix $\mathbf{X} \in \mathbb{R}^{|V| \times d}$, such that \( \vec{x}_v \in \mathbb{R}^d \) is a row of $\mathbf{X}$ representing the feature vector for node \( v \in V \). For a collection of multiple graphs, we use the notation \( \mathcal{G} = \{ G_1, G_2, \dots, G_N \} \).

\stitle{Homophily ratio.} 
Given a mapping between the nodes of a graph and a predefined set of labels, let $y_v$ denote the label mapped to node $v$. The homophily ratio $\mathcal{H}(G)$ evaluates the relationships between the labels and the graph structure \cite{zhu2020beyond,ma2021homophily}, measuring the fraction of homophilic edges whose two end nodes share the same label. More concretely,
\begin{align}\label{eq.graph-level-edge-homophily-ratio}
    %\mathcal{H}(G) = \frac{\left|\{(u,v) : (u,v) \in \mathcal{E} \land y_u = y_v\}\right|}{|E|}, 
    \mathcal{H}(G) = \frac{\left|\{(u,v) \in E : y_u = y_v\}\right|}{|E|}.
\end{align}
%where $y_u$ and $y_v$ are the labels of nodes $u$ and $v$.
Additionally, the homophily ratio can be defined for each node based on its local structure \cite{mao2024demystifying,xiao2024simple}, measuring the fraction of a node's neighbors that share the same label. %First, we define the homophily ratio of node $v$ as:
This node-specific ratio can be defined as
% The homophily ratio $\mathcal{H}(G)$ evaluates the relationships between node labels and the graph's structure, which can be categorized into edge homophily ratio and node homophily ratio.
% Specifically, the edge homophily ratio \cite{zhu2020beyond,ma2021homophily} quantifies the fraction of edges that connect two nodes with the same label, formally defined as:
% \begin{equation}\label{eq.graph-level-edge-homophily-ratio}
%     \mathcal{H}_{\text{edge}}(G) = \frac{\left|\{(u,v) : (u,v) \in E \land y_u = y_v\}\right|}{|E|}, 
% \end{equation}
% where $y_u$ and $y_v$ are the labels of nodes $u$ and $v$. Furthermore, the node homophily ratio \cite{mao2024demystifying,xiao2024simple} measures the proportion of a node's neighbors that share the same label, formally defined as:
\begin{align}\label{eq.node-homophily-ratio}
   % \mathcal{H}(v) = \frac{|\{u : u \in \mathcal{N}(v) \land y_u = y_v\}|}{|\mathcal{N}(v)|},
    \mathcal{H}(v) = \frac{|\{u \in \mathcal{N}(v) : y_u = y_v\}|}{|\mathcal{N}(v)|},
\end{align}
where $\mathcal{N}(v)$ is the set of neighboring nodes of $v$. 
%By averaging the node homophily ratio of all nodes in graph $G$, we obtain:
%\begin{equation}\label{eq.graph-level-node-homophily-ratio}
%    \mathcal{H}_{\text{node}}(G) = \frac{1}{|V|} \sum_{v\in V} \mathcal{H}(v).
%\end{equation}
Note that both $\mathcal{H}(G)$ and $\mathcal{H}(v)$ fall in $[0,1]$. 
%0\leq \mathcal{H}(G)\leq 1$. 
Graphs or nodes with a larger proportion of homophilic edges have a higher homophily ratio.
%, while graphs with more heterophilic patterns have lower homophily ratio.

\stitle{Graph encoder.}
Graph encoders learn latent representations of graphs, embedding their nodes into some feature space. A widely used family of graph encoders is GNNs, which typically utilize a message-passing mechanism \cite{wu2020comprehensive,zhou2020graph}. Specifically, each node aggregates messages from its neighbors to generate its own representation. By stacking multiple layers, GNNs enables recursive message passing throughout the graph. Formally, the embedding of a node \( v \) in the \( l \)-th GNN layer, denoted as \( \vec{h}^l_v \), is computed as follows.
\begin{align}
    \vec{h}^l_v = \mathtt{Aggr}(\vec{h}^{l-1}_v, \{\vec{h}^{l-1}_u : u \in \bN(v)\}; \theta^l),
\end{align}
where %\( \bN_v \) represents the set of neighboring nodes of \( v \), 
\( \theta^l \) are the learnable parameters in the \( l \)-th layer, and \( \mathtt{Aggr}(\cdot) \) is the aggregation function, which can take various forms \cite{hamilton2017inductive,kipf2016semi,velivckovic2017graph,xu2018powerful,yu2023learning}. In the first layer, the input node embedding \( \vec{h}^0_v \) is typically initialized from the node feature vector \( \vec{x}_v \). The full set of learnable parameters is denoted as \( \Theta = \{\theta^1, \theta^2, \ldots\} \). For simplicity, we define the output node representations of the final layer as \( \vec{h}_v \), which can then be fed into the loss function for a specific task.

\stitle{Problem statement.}
In this work, we aim to pre-train a graph encoder and develop a prompt learning framework for non-homophilic graphs. More specifically, both the pre-training and prompt learning are not sensitive to the diverse non-homophilic characteristics of the graph and its nodes. 

To evaluate our non-homophilic pre-training and prompt learning, we focus on two common tasks on graph data: node classification and graph classification, in few-shot settings.
For node classification within a graph \( G = (V, E) \) over a set of node classes  \( Y \), each node \( v_i \in V \) has a class label \( y_i \in Y \). Similarly, for graph classification across a graph collection \( \mathcal{G} \) with class labels \( Y \), each graph \( G_i \in \mathcal{G} \) has a class label \( Y_i \in Y \). In the few-shot setting, there are only \( k \) labeled samples per class, where \( k \) is a small number (e.g., \( k \le 10 \)). This scenario is known as \( k \)-shot classification \cite{liu2023graphprompt,yu2024generalized,yu2023multigprompt}. Note that the homophily ratio is defined with respect to some predefined set of labels, which may or may not be related to the class labels in downstream tasks.
\section{Revisiting Graph Pre-training}\label{sec.theory}
In this section, we revisit graph pre-training tasks to cope with non-homophilic graphs. We first propose the definition of \textit{homophily tasks} and reveal its connection to the training loss. The theoretical insights further guide us in choosing graph pre-training tasks.

\subsection{Theoretical Insights}

We focus on contrastive graph pre-training tasks. Consider a mainstream contrastive task \cite{you2020graph,qiu2020gcc,zhu2020deep,hassani2020contrastive,liu2023graphprompt,yu2024generalized}, $T=(\{\mathcal{A}_u:u\in V\},\{\mathcal{B}_u:u\in V\})$, where $\mathcal{A}_u$ is the set of positive instances for node $u$, and $\mathcal{B}_u$ is the set of negative instances for $u$. Its loss function $\mathcal{L}_T$ can be standardized \cite{yu2024generalized} to a similar form as follows.
\begin{align}\label{eq.contra-loss}
 %    \mathcal{L}_{T} = -\sum_{u \in V} \left(\ln{\sum_{a \in \mathcal{A}_u} \mathtt{sim}(\vec{h}_u, \vec{h}_a)}-\ln{\sum_{b \in \mathcal{B}_u} \mathtt{sim}(\vec{h}_u, \vec{h}_b)}\right),
%    \mathcal{L}_{T} = -\sum_{u \in V} \ln{\frac{\sum_{a \in \mathcal{A}_u} \mathtt{sim}(\vec{h}_u, \vec{h}_a)}{\sum_{b \in \mathcal{B}_u} \mathtt{sim}(\vec{h}_u, \vec{h}_b)}},
       % \mathcal{L}_{T} = -\sum_{u \in V} \ln{\frac{\sum_{a \in \mathcal{A}_u} \mathtt{sim}(\vec{h}_u, \vec{h}_a)}{\sum_{a \in \mathcal{A}_u} \mathtt{sim}(\vec{h}_u, \vec{h}_a)+\sum_{b \in \mathcal{B}_u} \mathtt{sim}(\vec{h}_u, \vec{h}_b)}},
      \mathcal{L}_{T} &= -\sum_{u \in V} \ln P(u,\mathcal{A}_u,\mathcal{B}_u),\\
      P(u,\mathcal{A}_u,\mathcal{B}_u)&\triangleq\frac{\sum_{a \in \mathcal{A}_u} \mathtt{sim}(\vec{h}_u, \vec{h}_a)}{\sum_{a \in \mathcal{A}_u} \mathtt{sim}(\vec{h}_u, \vec{h}_a)+\sum_{b \in \mathcal{B}_u} \mathtt{sim}(\vec{h}_u, \vec{h}_b)},
       % \mathcal{L}_{T} &= -\sum_{u \in V}\sum_{a \in \mathcal{A}_u}\sum_{b \in \mathcal{B}_u} \ln P(u,a,b),\\
       % P(u,a,b)&\triangleq\frac{ \mathtt{sim}(\vec{h}_u, \vec{h}_a)}{ \mathtt{sim}(\vec{h}_u, \vec{h}_a)+ \mathtt{sim}(\vec{h}_u, \vec{h}_b)},
\end{align}
where $\mathtt{sim}(\cdot,\cdot)$ represents a similarity function such as cosine similarity %\cite{rahutomo2012semantic} 
in our experiments. The optimization objective of task $T$ in Eq.~\eqref{eq.contra-loss} is to maximize the similarity between $u$ and its positive instances while minimizing the similarity between $u$ and its negative instances. 
Based on this loss, we further propose the definitions of \emph{homophily tasks} and \emph{homophily samples}.

\begin{definition}[Homophily Task] \label{def.homphily-loss}
On a graph $G=(V,E)$, a pre-training task $T=(\{\mathcal{A}_u:u\in V\},\{\mathcal{B}_u:u\in V\})$ is a homophily task if and only if, $\forall u\in V, \forall a \in \mathcal{A}_u, \forall b \in \mathcal{B}_u, (u, a) \in E \land (u, b) \notin E$. A task that is not a homophily task is called a non-homophily task.\qed
\end{definition}

In particular, the widely used link prediction task \cite{peng2020graph,liu2023graphprompt,yu2023hgprompt,yu2023multigprompt,nguyen2024diffusion,yu2024generalized} is a homophily task, where $\mathcal{A}_u$ is a subset of nodes linked to $u$ and $\mathcal{B}_u$ is a subset of nodes not linked to $u$. 

\begin{definition}[Homophily Sample] \label{def.homphily-sample}
On a graph $G=(V,E)$, consider a triplet $(u, a, b)$ where $u \in V$, $(u, a) \in E$ and $(u, b) \notin E$. The triplet $(u, a, b)$ is a homophily sample if and only if $\mathtt{sim}(\vec{h}_u, \vec{h}_a) > \mathtt{sim}(\vec{h}_u, \vec{h}_b)$, and it is a non-homophily sample otherwise. \qed
%On a graph $G=(V,E)$, a triplet $(u, a, b)$  is a homophily sample if and only if $u\in V$ , where $u \in \mathcal{V}_{pre}, a \in \mathcal{A}_u, b \in \mathcal{B}_u$, satisfies that $(u, a) \in \mathcal{E} \land (u, b) \notin \mathcal{E} \land \mathtt{sim}(u, a) > \mathtt{sim}(u, b)$ is a homophily sample.
\end{definition}

Subsequently, we can establish the following theorems. 
\begin{theorem}\label{theorem.loss}
    For a homophily task $T$, 
    adding a homophily sample always results in a smaller loss than adding a non-homophily sample.
\end{theorem}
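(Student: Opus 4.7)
My plan is to isolate the per-node summand of $\mathcal{L}_T$ in Eq.~\eqref{eq.contra-loss}, observe that because $T$ is a homophily task any admissible new triplet $(u,a,b)$ must place $a$ into $\mathcal{A}_u$ and $b$ into $\mathcal{B}_u$, and then compare a homophily-sample insertion against a non-homophily-sample insertion whose endpoints carry the same pair of similarity values. With this setup the inequality boils down to the monotonicity of the contrastive summand in the positive-side similarity, which is essentially immediate.

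First, I would peel off the $u$-th summand of $\mathcal{L}_T$ as
\begin{align*}
    \ell_u(\mathcal{A}_u,\mathcal{B}_u) = -\ln\frac{S_A}{S_A+S_B}, \quad S_A=\sum_{a\in\mathcal{A}_u}\mathtt{sim}(\vec{h}_u,\vec{h}_a), \quad S_B=\sum_{b\in\mathcal{B}_u}\mathtt{sim}(\vec{h}_u,\vec{h}_b),
\end{align*}
and note that every other summand is untouched when a triplet is appended to $(\mathcal{A}_u,\mathcal{B}_u)$. Because $T$ is a homophily task, Definition~\ref{def.homphily-loss} forces $(u,a)\in E$ and $(u,b)\notin E$ for any admissible $(u,a,b)$, so $a$ must enter $\mathcal{A}_u$ and $b$ must enter $\mathcal{B}_u$. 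The updated summand is therefore
\begin{align*}
    \ell_u^{+}(s_a,s_b) = -\ln\frac{S_A+s_a}{S_A+S_B+s_a+s_b},
\end{align*}
with $s_a=\mathtt{sim}(\vec{h}_u,\vec{h}_a)$ and $s_b=\mathtt{sim}(\vec{h}_u,\vec{h}_b)$.

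Second, I would anchor the comparison by fixing the unordered pair of similarity values $\{s_+,s_-\}$ with $s_+>s_-$. By Definition~\ref{def.homphily-sample}, a homophily insertion realizes $(s_a,s_b)=(s_+,s_-)$, while a non-homophily insertion built on the same value pair realizes $(s_a,s_b)=(s_-,s_+)$. Since the denominator $S_A+S_B+s_++s_-$ is identical in both cases,
\begin{align*}
    \ell_u^{+}(s_-,s_+) - \ell_u^{+}(s_+,s_-) = \ln\frac{S_A+s_+}{S_A+s_-} > 0
\end{align*}
follows from $s_+>s_-$ together with positivity of the similarity sums. The strict inequality then lifts to $\mathcal{L}_T$ because only the $u$-th summand is changed by the insertion.

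The hard part is interpretational rather than computational: the statement is only meaningful once one specifies what is held constant across the two candidate insertions. My proposal is to pin down the multiset of similarity values so that the comparison cleanly isolates the homophily/non-homophily distinction from the magnitudes of the similarities themselves; after that, the inequality reduces to the trivial monotonicity of $x\mapsto -\ln\tfrac{A+x}{D}$ in $x$. A secondary subtlety is the boundary case $s_+=s_-$, which is ruled out by the strict gap required in Definition~\ref{def.homphily-sample}, ensuring the conclusion remains strict rather than merely non-strict.
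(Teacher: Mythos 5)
Your proof is correct under the reading you adopt, but it takes a genuinely different route from the paper's. The paper does not work with the aggregated per-node summand at all: it treats each added triplet as carrying its own standalone probability $p(u,a,b)=\frac{\mathtt{sim}(\vec{h}_u,\vec{h}_a)}{\mathtt{sim}(\vec{h}_u,\vec{h}_a)+\mathtt{sim}(\vec{h}_u,\vec{h}_b)}$ and argues purely via the $0.5$ threshold --- a homophily sample has $p>0.5$ by Definition~\ref{def.homphily-sample}, a non-homophily sample has $p\le 0.5$, hence the former always contributes a strictly smaller loss term, with no constraint relating the similarity values of the two candidate insertions. Your argument instead respects the actual aggregated form of Eq.~\eqref{eq.contra-loss}, where $s_a$ and $s_b$ are absorbed into the sums $S_A$ and $S_B$ inside a single logarithm, and you buy the comparison by anchoring both insertions to the same unordered pair $\{s_+,s_-\}$ and swapping roles. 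Each approach pays a different price: the paper's comparison holds for \emph{arbitrary} pairs of homophily and non-homophily samples but implicitly assumes the loss decomposes per triplet rather than per node; yours is faithful to the stated loss but the anchoring is not a cosmetic choice --- without it the claim is false for the aggregated loss (e.g., with $S_A=0.5$, $S_B=1.5$, a non-homophily insertion with $s_{a'}=s_{b'}=10$ raises the ratio to about $0.477$ while a homophily insertion with $s_a=0.01$, $s_b=0.005$ only reaches about $0.253$, so the non-homophily insertion yields the smaller loss). So your ``interpretational'' caveat is doing real mathematical work, and it identifies a hypothesis the paper's statement leaves implicit; the paper avoids needing it only by silently switching to a per-triplet loss. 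Your handling of the boundary case is also right: a non-homophily sample may have $s_a=s_b$, but an anchored pair with $s_+=s_-$ admits no homophily insertion, so the strict inequality survives.
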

\begin{proof}
Consider a homophily sample $(u, a, b)$ for some $(u,a) \in E$ and $(u,b) \notin E$,
%such that $\mathtt{sim}(u, a') > \mathtt{sim}(u, b')$. 
as well as a non-homophily sample $(u, a', b')$ for some $(u,a') \in E$ and $(u,b') \notin E$.
%such that $\mathtt{sim}(u, a'') \le \mathtt{sim}(u, b'')$. 
%  $\mathcal{A}'_u=\mathcal{A}_u\cup\{a'\}$, $\mathcal{B}'_u=\mathcal{B}_u\cup\{b'\}$,
%$\mathcal{A}''_u=\mathcal{A}_u\cup\{a''\}$, $\mathcal{B}''_u=\mathcal{B}_u\cup\{b''\}$.
Let the overall loss with  $(u, a, b)$ be $L_T$, and that with $(u, a', b')$ be $L'_T$.
Since $(u,a,b)$ is a homophily sample, we have $\mathtt{sim}(\vec{h}_u, \vec{h}_a) > \mathtt{sim}(\vec{h}_u, \vec{h}_b)$, and thus 
$$p(u,a,b)=\textstyle\frac{ \mathtt{sim}(\vec{h}_u, \vec{h}_a)}{ \mathtt{sim}(\vec{h}_u, \vec{h}_a)+ \mathtt{sim}(\vec{h}_u, \vec{h}_b)}>0.5.$$
Moreover, since $(u,a',b')$ is a non-homophily sample, we have $\mathtt{sim}(\vec{h}_u, \vec{h}_a') \le \mathtt{sim}(\vec{h}_u, \vec{h}_b')$, and thus $p(u,a',b')\le 0.5$. Hence, $p(u,a,b)>p(u,a',b')$, implying that $L_T<L'_T$.
%It is clear that $p(u,\mathcal{A}'_u,\mathcal{B}'_u)>p(u,\mathcal{A}''_u,\mathcal{B}''_u)$, since 
%$\frac{\mathtt{sim}(u, a')}{\mathtt{sim}(u, b')}>1\ge \frac{\mathtt{sim}(u, a'')}{\mathtt{sim}(u, b'')}$.
%Hence, adding a homophily sample $(u, a', b')$ would reduce the loss $\mathcal{L}_{T}$ more than adding a non-homophily sample $(u, a'', b'')$.
% Thus, $\ln(\mathtt{sim}(u, b))-\ln(\mathtt{sim}(u, a))$ decreases, leading to a decrease in $\mathcal{L}_{T}$.
%On the other hand, given a non-homophily sample $(u, a, b)$, where $(u, a) \notin \mathcal{E}$ or $(u, b) \in \mathcal{E}$ or $\mathtt{sim}(u, a) \leq \mathtt{sim}(u, b)$. Since $(u, a, b)$ is a sample of $T$, $a \in \mathcal{A}_u \Rightarrow (u, a) \in \mathcal{E}$, and $b \in \mathcal{B}_u \Rightarrow (u, b) \notin \mathcal{E}$, thus $\mathtt{sim}(u, a) \leq \mathtt{sim}(u, b)$. Therefore, $\ln(\mathtt{sim}(u, b))-\ln(\mathtt{sim}(u, a))$ increases, leading to an increase in $\mathcal{L}_{T}$.
\end{proof}

\begin{theorem}\label{theorem.sample}
Consider a graph $G=(V,E)$ with a label mapping function $V \rightarrow Y$, where $y_v\in Y$ is the label mapped to $v\in V$. Suppose that the label mapping and node similarity are consistent, \ie, $$\forall u, a, b \in V, y_u = y_a \land y_u \neq y_b \Rightarrow \mathtt{sim}(\vec{h}_u, \vec{h}_a) > \mathtt{sim}(\vec{h}_u, \vec{h}_b).$$ Let $\mathbb{E}_T$ denote the expected number of homophily samples for a homophily task $T$ on the graph $G$.
Then, $\mathbb{E}_T$ increases monotonically as the homophily ratio $\mathcal{H}(G)$ defined w.r.t.~$Y$ increases.
%and $\mathcal{H}(G)$ has a positive relationship. That is, when \( \mathcal{H}(G) \) increases, $\mathbb{N}$ also increases, and when \( \mathcal{H}(G) \) decreases, $\mathbb{N}$ also decreases.
\end{theorem}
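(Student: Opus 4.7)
The plan is to write $\mathbb{E}_T$ as a sum of indicator probabilities over admissible triples and then track how its value moves with $\mathcal{H}(G)$. Because $T$ is a homophily task, every relevant triple $(u,a,b)$ with $u\in V$, $a\in\mathcal{A}_u$, $b\in\mathcal{B}_u$ satisfies $(u,a)\in E$ and $(u,b)\notin E$, so I can legitimately write
\begin{equation*}
\mathbb{E}_T \;=\; \sum_{u\in V}\sum_{a\in\mathcal{A}_u}\sum_{b\in\mathcal{B}_u} \Pr\!\big[(u,a,b)\text{ is a homophily sample}\big].
\end{equation*}
The goal is to rewrite the summand in purely label-based terms using the consistency hypothesis, and then to read off monotonicity in $\mathcal{H}(G)$.

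First I would partition the admissible triples into four label configurations: (i) $y_u=y_a$ and $y_u\neq y_b$, (ii) $y_u\neq y_a$ and $y_u=y_b$, (iii) $y_u=y_a$ and $y_u=y_b$, (iv) $y_u\neq y_a$ and $y_u\neq y_b$. Applying the consistency hypothesis directly, every triple in class (i) satisfies $\mathtt{sim}(\vec{h}_u,\vec{h}_a)>\mathtt{sim}(\vec{h}_u,\vec{h}_b)$ and is therefore a homophily sample, while every triple in class (ii) is, by the same token, a non-homophily sample. Classes (iii) and (iv) contribute some residual term that I will argue is invariant (or at worst increasing) in $\mathcal{H}(G)$.

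Next I would count the classes as functions of $\mathcal{H}(G)$. By the definition in Eq.~\eqref{eq.graph-level-edge-homophily-ratio}, exactly a $\mathcal{H}(G)$-fraction of the edges satisfies $y_u=y_a$, so the number of $(u,a)$ pairs contributing to classes (i) and (iii) is $\mathcal{H}(G)\cdot|E|$, while the pairs contributing to classes (ii) and (iv) number $(1-\mathcal{H}(G))\cdot|E|$. Holding the label mapping and the non-edge structure fixed, the $b$-factor in each triple is drawn from a label distribution on non-neighbors that does not depend on $\mathcal{H}(G)$; hence the count of class (i) triples scales as $\mathcal{H}(G)$ and the count of class (ii) triples scales as $1-\mathcal{H}(G)$. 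Since class (i) triples contribute $+1$ to $\mathbb{E}_T$ and class (ii) triples contribute $0$, the class-(i)–(ii) part of $\mathbb{E}_T$ is a strictly increasing affine function of $\mathcal{H}(G)$, and monotonicity follows provided the classes (iii) and (iv) residual does not move in the opposite direction.

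The main obstacle is precisely the treatment of classes (iii) and (iv), where the consistency hypothesis is silent about the ordering of similarities. The cleanest way I see to close the argument is to compare two graphs $G$ and $G'$ on the same vertex set, with identical label mapping and identical non-edge label statistics, but with $\mathcal{H}(G)<\mathcal{H}(G')$ obtained by swapping heterophilic edges for homophilic ones; this swap converts class-(ii) contributions into class-(i) contributions while leaving the ambiguous residual unchanged, yielding $\mathbb{E}_T<\mathbb{E}_{T'}$. A secondary subtlety, which I would address by an averaging argument over the random positive/negative sampling defining $T$, is that $\mathbb{E}_T$ is an expectation rather than a deterministic count, so the identities above should be interpreted in expectation over the sampling of $\mathcal{A}_u$ and $\mathcal{B}_u$.
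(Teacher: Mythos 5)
Your main line is the same as the paper's: write $\mathbb{E}_T$ as a sum over admissible triples $(u,a,b)$ of the probability of being a homophily sample, use the consistency hypothesis to reduce that probability to a statement about label agreement, and observe that the count of label-certified triples grows with $\mathcal{H}(G)$. The paper's proof does exactly this and stops there: it sets the homophily-sample probability equal to $P(y_u=y_a)\,(1-P(y_u=y_b))$, i.e., it treats membership in your class (i) as if it were both necessary and sufficient, and then notes that for a fixed number of nodes per label, raising $\mathcal{H}(G)$ raises $P(y_u=y_a)$ and lowers $P(y_u=y_b)$. Your handling of classes (i) and (ii) is correct and in fact more careful than the paper's --- class (ii) is indeed certified non-homophilic by applying the hypothesis with the roles of $a$ and $b$ exchanged.

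The gap is in your treatment of classes (iii) and (iv). The edge swap you propose does not leave the ambiguous residual unchanged: replacing a heterophilic edge $(u,a)$ by a homophilic one changes $\mathcal{A}_u$ itself, and it converts not only class-(ii) triples into class-(i) triples but also class-(iv) triples into class-(iii) triples. The consistency hypothesis says nothing about the similarity ordering within either (iii) or (iv), so the residual before and after the swap consists of different triples whose homophily-sample status is unconstrained; it can move in either direction, potentially by an amount that swamps the class-(i)/(ii) gain. As written, your argument therefore does not close. The way to recover the theorem is to do what the paper implicitly does: interpret the probability that $(u,a,b)$ is a homophily sample as exactly the probability of the class-(i) label configuration (equivalently, count only label-certified homophily samples), in which case your class-(i) count already delivers the monotonicity and the swap machinery is unnecessary. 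If you want a statement that covers all four classes honestly, you would need an additional assumption (e.g., that the ambiguous classes contribute a fixed expected fraction), which neither you nor the paper supplies.
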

\begin{proof}
For a homophily task $T=(\{\mathcal{A}_u:u\in V\},\{\mathcal{B}_u:u\in V\})$,
a triplet $(u,a,b)$ for some $u\in V, a\in \mathcal{A}_u$ and $b\in \mathcal{B}_u$ is a homophily sample with a probability of $P(y_u=y_a)(1-P(y_u=y_b))$, since $y_u=y_a\land y_u\neq y_b$ implies $\mathtt{sim}(\vec{h}_u, \vec{h}_a) > \mathtt{sim}(\vec{h}_u, \vec{h}_b)$. Hence, the expected number of homophily samples for $T$ is
\begin{align}
  %  \mathbb{E}_T =\sum_{u\in V} (\sum_{a\in \mathcal{A}_u}\sum_{b\in \mathcal{B}_u}p(a|u)(1-p(b|u))),
   \mathbb{E}_T =\sum_{u\in V} |\mathcal{A}_u||\mathcal{B}_u|P(y_u=y_a)(1-P(y_u=y_b)).
\end{align}
%where $p(a|u)$ represents the probability that given $u \in \mathcal{V}$, $y_a = y_u$. Similarly, $p(b|u)$ denotes the probability that given $u \in \mathcal{V}$, $y_b = y_u$. 
% Consequently, the number $\mathbb{E}$ of non-homophily samples is:
% \begin{equation}\small
% \begin{aligned}
%     \mathbb{E} & = \sum_{u\in \mathcal{V}_{pre}} |\mathcal{A}_u| | \mathcal{B}_u|-\mathbb{N} \\
%     & = \sum_{u\in \mathcal{V}_{pre}} (\sum_{a\in \mathcal{A}_u}\sum_{b\in \mathcal{B}_u}(1-p(a|u)(1-p(b|u))),
%     % [(\sum_{a\in \mathcal{A}_u} \mathtt{sim}(\vec{h}_u, \vec{h}_a)p(a|u),\sum_{b\in \mathcal{B}_u}\mathtt{sim}(\vec{h}_u, \vec{h}_b)p(b|u)) \\
%     % & \oplus \Gamma(\sum_{a\in \mathcal{A}_u} \mathtt{sim}(\vec{h}_u, \vec{h}_a)(1-p(a|u)),\sum_{b\in \mathcal{B}_u}\mathtt{sim}(\vec{h}_u, \vec{h}_b)p(b|u))\\
%     % & \oplus \Gamma(\sum_{a\in \mathcal{A}_u} \mathtt{sim}(\vec{h}_u, \vec{h}_a)(1-p(a|u)),\sum_{b\in \mathcal{B}_u}\mathtt{sim}(\vec{h}_u, \vec{h}_b)(1-p(b|u)))],
% \end{aligned}
% \end{equation}
% where $|\mathcal{A}_u|$ and $|\mathcal{B}_u|$ denotes the number of samples in $\mathcal{A}_u$ and $\mathcal{B}_u$, respectively.
For a constant number of nodes with label $y_u$, as $\mathcal{H}(G)$ increases, $P(y_u=y_a)$ increases while $P(y_u=y_b)$ decreases, leading to a larger $\mathbb{E}_T$.
%and a smaller $\mathbb{E}$. 
%Conversely, as $\mathcal{H}(G)$ decreases, $p(a|u)$ decreases while $p(b|u)$ increases, leading to a smaller $\mathbb{N}$. 
%and a larger $\mathbb{E}$.
\end{proof}

In the next part, the theorems will guide us in choosing the appropriate pre-training tasks for non-homophilic graphs.

\subsection{Non-homophilic Graph Pre-training}
Consider a homophily task $T$. Following Theorem~\ref{theorem.sample}, non-homophilic graphs with lower homophily ratios are expected to have fewer homophily samples and more non-homophily samples for $T$. Furthermore, based on Theorem~\ref{theorem.loss}, adding a non-homophily sample results in a larger loss than adding a homophily sample. %Since the optimization objective of the pre-training phase is to minimize $\mathcal{L}_T$, more non-homophily samples significantly obstruct this objective, leading to suboptimal performance. 
Consequently, for non-homophilic graphs, especially those with low homophily ratio, 
%non-homophily tasks are a better choice compared to homophily tasks when optimizing the training loss.
homophily tasks are not optimal for working with standard contrastive training losses, whereas non-homophily tasks may offer a better alternative.

%Moreover, we define methods that leverage homophily tasks as homophily methods and those that do not as non-homophily methods. 
We revisit mainstream graph pre-training methods and categorize them into two categories: homophily methods that employ homophily tasks, and non-homophily methods that do not.
Specifically, GPPT \cite{sun2022gppt}, GraphPrompt \cite{liu2023graphprompt} and HGPrompt \cite{yu2023hgprompt} 
%and GraphPrompt+ \cite{yu2024generalized} 
are all homophily methods, since their pre-training tasks utilizes a form of link prediction, where $\mathcal{A}_u$ is a set of nodes linked to $u$, and $\mathcal{B}_u$ is a set of nodes not linked from $u$. In contrast, DGI \cite{velickovic2019deep}, GraphCL \cite{you2020graph}, %DSSL \cite{xiao2022decoupled}, 
and GraphACL \cite{xiao2024simple} are non-homophily methods, since $\mathcal{A}_u$ or $\mathcal{B}_u$ in their pre-training tasks is not related to the connectivity with $u$. Further details of these methods are shown in Appendix~\ref{sec.app.homo-task}.

In our experiments, we find that GraphCL \cite{you2020graph}, a non-homophily pre-training method, performs well across most graphs, including non-homophilic ones. We also experiment with link prediction 
\cite{liu2023graphprompt,sun2022gppt} and GraphACL \cite{xiao2024simple} to study their effects on different graphs. %, as shown in Table~\ref{table.pre-train}.

\section{Non-homophilic Prompt Learning}%: \model}

\begin{figure*}[t]
\centering
\includegraphics[width=0.82\linewidth]{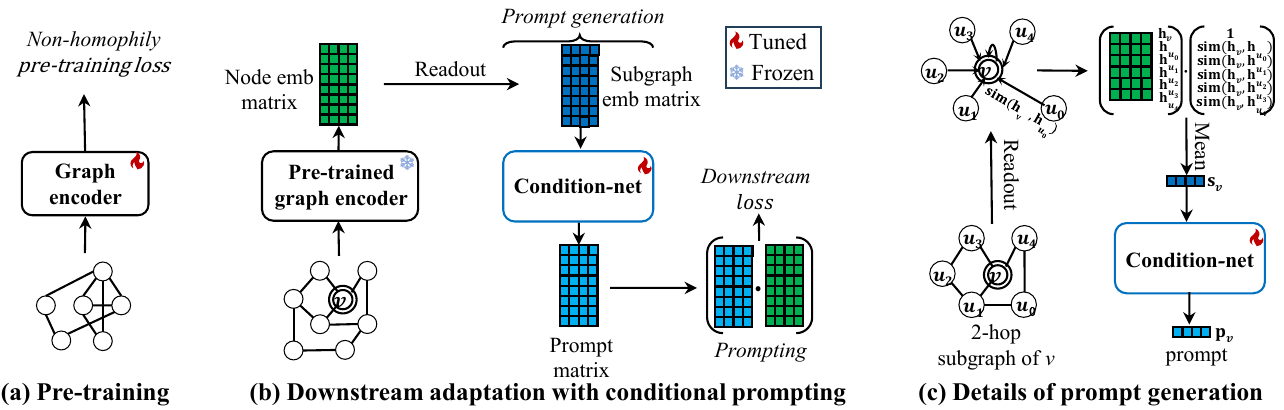}
\vspace{-2mm}
\caption{Overall framework of \model.}
\label{fig.framework}
\end{figure*}

In this section, we propose \model, a prompt learning framework for non-homophilic graphs. 
We first introduce the overall framework, and then develop the prompt generation and tuning process. Finally, we analyze the complexity of the proposed algorithm.

\subsection{Overall Framework}
We illustrate the overall framework of \model\ in Fig.~\ref{fig.framework}. It involves two stages: (a) graph pre-training and (b) downstream adaptation. In graph pre-training, we pre-train a graph encoder using a non-homophilic pre-training task, as shown in Fig.~\ref{fig.framework}(a). Subsequently, to adapt the pre-trained model to downstream tasks, we propose a conditional network (condition-net) that generates a series of prompts, as depicted in Fig.~\ref{fig.framework}(b). As a result, each node is equipped with its own prompt, which can be used to modify its features to align with the downstream task. %patterns observed in the pre-training phase, 
More specifically, the prompt generation is conditioned on the unique patterns of each node, in order to achieve fine-grained adaptation catering to the diverse non-homophilic characteristics of each node, as detailed in Fig.~\ref{fig.framework}(c).

\subsection{Prompt Generation and Tuning}
\stitle{Prompt generation.} 
In non-homophilic graphs, different nodes are characterized by unique non-homophilic patterns. Specifically, different nodes typically have diverse homophily ratios $\mathcal{H}(v)$, indicating distinct topological structures linking to their neighboring node. Moreover, even nodes with similar homophily ratios may have different neighborhood distributions in terms of the varying homophily ratios of the neighboring nodes.   
Therefore, instead of learning a single prompt for all nodes as in standard graph prompt learning \cite{liu2023graphprompt,sun2022gppt,sun2023all,yu2024generalized}, we design a condition-net \cite{zhou2022conditional} to generate a series of non-homophilic pattern-conditioned prompts. Consequently, each node is equipped with its own unique prompt, aiming to adapt to its distinct non-homophilic characteristics.

First, the non-homophilic patterns of a node can be characterized by considering a multi-hop neighborhood around the node. Specifically, given a node $v$, we readout their $\delta$-hop ego-network $S_v$, which is an induced subgraph containing the node $v$ and nodes reachable from $v$ in at most $\delta$ steps. Inspired by GGCN \cite{yan2022two}, % which use similarity between $v$ and their neighbors to weight neighbors embeddings,
the readout is weighted by the similarity between $v$ and their neighbors, as shown in Fig.~\ref{fig.framework}(c)
%for node $v$, define its $k$-hop subgraph $S_v$, we readout the subgraph embedding to capture node $v$'s structural pattern as:
, obtaining a representation of the subgraph $S_v$ given by 
\begin{align}\label{eq.readout}
    \vec{s}_v=\frac{1}{|V(S_v)|} \sum_{u\in V(S_v)}\vec{h}_u\cdot \mathtt{sim}(\vec{h}_u,\vec{h}_v),
\end{align}
where $V(S_v)$ denotes the set of nodes in $S_v$. In our experiment, we set $\delta=2$ to balance between efficiency and capturing more unique non-homophilic patterns in the neighborhood of $v$.

Next, for each downstream task, our goal is to assign a unique prompt vector to each node. However, directly parameterizing these prompt vectors would significantly increase the number of learnable parameters, which may overfit  to the lightweight supervision in few-shot settings.
To cater to the unique non-homophilic characteristics of each node with minimal parameters, we propose to employ a condition-net \cite{zhou2022conditional} to generate node-specific prompt vectors. Specifically, conditioned on the subgraph readout $\vec{s}_v$ of a node $v$, the condition-net generates a unique prompt vector for $v$ w.r.t. a task $t$, denoted by %$\vec{p}_{t,v}\in \mathbb{R}^{\tilde{d}}$ 
$\vec{p}_{t,v}$, as follows.
\begin{align}\label{eq.prompt-generation}
    \vec{p}_{t,v}=\mathtt{CondNet}(\vec{s}_v;\phi_t),
\end{align}
where $\mathtt{CondNet}$ is the condition-net parameterized by $\phi_t$. It outputs a unique prompt vector $\vec{p}_{t,v}$, which varies based on the input $s_v$ that characterizes the non-homophily patterns of node $v$.
Note that this is a form of hypernetworks \cite{ha2016hypernetworks}, which employs a secondary network to generate the parameters for the main network conditioned on the input feature. In our context, the condition-net is the secondary network, generating prompt parameters without expanding the number of learnable parameters in the main network. The secondary network $\mathtt{CondNet}$ can be any learnable function, such as a fully-connected layer or a multi-layer perceptron (MLP). We employ an MLP with a compact bottleneck architecture \cite{wu2018reducing}. 

Subsequently,  we perform fine-grained, node-wise adaptation to task $t$. Concretely, the prompt $\vec{p}_{t,v}$ for node $v$ is employed to adjust $v$'s features or its embeddings in the hidden or output layers  \cite{yu2024generalized}.
In our experiments, we choose a simple yet effective implementation that modifies the nodes' output embeddings through an element-wise product, as follows. 
%node \( v \)'s embeddings are then adjusted by the prompt:
\begin{align}\label{eq.prompt}
\tilde{\vec{h}}_{t,v}=\vec{p}_{t,v} \odot \vec{h}_v,
\end{align}
where the prompt $\vec{p}_{t,v}$ is generated with an equal dimension as $\vec{h}_v$.
%Note that adjusting output embeddings is one possible solution. In general, prompts can also be used to modify input features or hidden layer embeddings for hierarchical adaptation \cite{yu2024generalized}.

\stitle{Prompt tuning.}\label{sec.model.tuning}
In this work, we focus on two common types of downstream task: node classification and graph classification. The prompt tuning process does not directly optimize the prompt vectors; instead it optimizes the condition-net, which subsequently generates the prompt vectors, for a given downstream task. 

We utilize a loss function based on node/graph similarity following previous work \cite{liu2023graphprompt,yu2024generalized}.
Formally, for a task $t$ with a labeled training set $\mathcal{D}_t=\{(x_1, y_1), (x_2, y_2), \ldots\}$, where $x_i$ can be either a node or a graph, and $y_i \in Y$ is $x_i$'s class label from a set of classes $Y$. The downstream loss function is
\begin{align}\label{eq.prompt-loss}
    \bL_{\text{down}}(\phi_t)=
    -\sum_{(x_i,y_i)\in \mathcal{D}_t}\ln\frac{\exp\left(\frac{1}{\tau}\text{sim}(\vec{\tilde{h}}_{t,x_i},\vec{\bar{h}}_{t,y_i})\right)}{\sum_{c\in Y}\exp\left(\frac{1}{\tau}\text{sim}(\vec{\tilde{h}}_{t,x_i},\vec{\bar{h}}_{t,c})\right)},
\end{align}
where $\vec{\tilde{h}}_{t,x_i}$ denotes the output embedding of node $v$/graph $G$ for task $t$. Specifically, for node classification $\vec{\tilde{h}}_{t,{v}}$ is the output embedding in Eq.~\eqref{eq.prompt}; for graph classification, $\vec{\tilde{h}}_{t,{G}}=\sum_{u\in V}\vec{\tilde{h}}_{t,{u}}$, involving an additional graph readout.
The prototype embedding for class \( c \), \( \vec{\bar{h}}_{t, c} \), is the average of the embedding of all labeled nodes/graphs belonging to class $c$.

During prompt tuning, we update only the lightweight parameters of the condition-net ($\phi_t$), while freezing the pre-trained GNN weights. Thus, our approach is parameter-efficient and amenable to few-shot settings, where \( \mathcal{D}_t \) contains only a small number of training examples for task $t$.

\subsection{Algorithm and Complexity Analysis}
\stitle{Algorithm.} We detail the main steps for the conditional prompt generation and tuning in Algorithm~\ref{alg.prompt}, Appendix~\ref{app.alg}. 
%In brief, we iterate through each downstream task to learn the corresponding prompt vectors individually. In lines 3--5, we compute the embedding for each node using the pre-trained graph encoder, with the pre-trained weights \( \Theta_0 \) remaining fixed throughout the adaptation process.
% In lines 8--22, we optimize the condition-net. Specifically, we performt similarity-weighted readout (lines 9--11), generate prompts (lines 12--13), modify nodes' embeddings using these prompts (lines 12--15), and update the embeddings for the prototypical nodes/graphs based on the few-shot labeled data provided in the task (lines 18--19). Note that updating prototypical nodes/graphs is necessary only for classification tasks.

\stitle{Complexity analysis.}
For a downstream graph \( G \), the computational process of \model\ involves two main parts: encoding nodes via a pre-trained GNN, and conditional prompt learning.
The first part's complexity is determined by the GNN architecture, akin to other methods employing a pre-trained GNN. In a standard GNN, each node aggregates features from up to \( D \) neighbors per layer. Thus, the complexity of calculating node embeddings over \( L \) layers is \( O(D^L \cdot |V|) \), where \( |V| \) denotes the number of nodes.
The second part, conditional prompt learning, has two stages: prompt generation and prompt tuning. In prompt generation, each subgraph embedding is fed into the condition-net. The subgraph embedding of each node involves a readout from the $\delta$-hop neighborhood, resulting in a complexity of \( O(D^\delta \cdot |V|) \) with at most \( D \) neighbors per hop. During prompt tuning, each node in \( G \) is adjusted using a prompt vector, with a complexity of \( O(|V|) \). Therefore, the total complexity for conditional prompt learning is \( O(D^\delta \cdot |V|) \).

In conclusion, the overall complexity of \model\ is \( O((D^L+D^\delta) \cdot |V|) \). 
As both $L$ and $\delta$ are small constants, the two parts have comparable complexity. That is, the proposed conditional prompt learning does not increase the order of complexity relative to the pre-trained GNN encoder, if $\delta$ is chosen to be no larger than $L$.
%The first part dominates the overall complexity, as \( O(D^L \cdot |V|) \gg O(3 \cdot |V|) \), where we set $L=2$ for experiments on low-homophily graphs.
%Thus, the additional computational cost introduced by the conditional prompt tuning step is minimal.

\section{Experiments}
In this section, we conduct experiments to evaluate \model, and analyze the empirical results.

% \begin{table}[tbp]
% \center
% \small
% \addtolength{\tabcolsep}{-1mm}
% \vspace{-3mm}
% \caption{Summary of datasets. 
% \label{table.datasets}}
% \resizebox{1\columnwidth}{!}{%
% \begin{tabular}{@{}c|ccccccc@{}}
% \toprule
%     & \makecell[c]{Graphs} &\makecell[c]{Homophily \\ ratio} & \makecell[c]{Graph \\ classes} & \makecell[c]{Avg.\\ nodes} & \makecell[c]{Avg. \\ edges} & \makecell[c]{Node \\ features} & \makecell[c]{Node \\ classes} \\
% \midrule
%      Wisconsin & 1 & 0.21 & - & 251 & 199 & 1,703 & 5 \\
%      Squirrel & 1 & 0.22 & - & 5,201 & 217,073 & 2,089 & 5 \\
%      Chameleon & 1 & 0.23 &- & 2,277 & 36,101 & 2,325 & 5 \\
%      Cornell & 1 & 0.30 & - & 183 & 295 & 1,703 & 5 \\
%      PROTEINS & 1,113 & 0.66 & 2 & 39.06 & 72.82 & 1 & 3 \\
%      ENZYMES & 600 & 0.67 & 6 & 32.63 & 62.14 & 18 & 3 \\
%      Citeseer & 1 & 0.74 & - & 3,327 & 4,732 & 3,703 & 6 \\ 
%      Cora & 1 & 0.81 & - & 2,708 & 5,429 & 1,433 & 7 \\ 
%      BZR & 405 & - & 2 & 35.75 & 38.36 & 3 & - \\
%      COX2 & 467 & - & 2 & 41.22 & 43.45 & 3 & - \\
%  \bottomrule
% \end{tabular}}
% \parbox{1\columnwidth}{\raggedright \footnotesize Homophily ratio is calculated by Eq.~\ref{eq.graph-level-edge-homophily-ratio}. Note that \textit{BZR} and \textit{COX2} do not have any node label, and thus it is not able to calculate their homophily ratios.
% }
% \end{table}

\begin{table*}[tbp]
    \centering
    \small
     %\addtolength{\tabcolsep}{0.5mm}
    \caption{Accuracy evaluation on one-shot node classification. 
    }
    \vspace{-2mm}
    \label{table.node-classification}%
    \resizebox{1\linewidth}{!}{%
    \begin{tabular}{@{}l|c|c|c|c|c|c|c|c@{}}
    \toprule
   Methods & Wisconsin & Squirrel & Chameleon & Cornell & PROTEINS & ENZYMES & Citeseer & Cora \\\midrule\midrule
    \method{GCN} 
    & 21.39 $\pm$ \phantom{0}6.56 & 20.00 $\pm$ \phantom{0}0.29
    & 25.11 $\pm$ \phantom{0}4.19 & 21.81 $\pm$ \phantom{0}4.71
    & 43.32 $\pm$ \phantom{0}9.35& 48.08 $\pm$ \phantom{0}4.71& 31.27 $\pm$ \phantom{0}4.53 & 28.57 $\pm$ \phantom{0}5.07\\ 
    \method{GAT} 
    & 28.01 $\pm$ \phantom{0}5.40 & 21.55 $\pm$ \phantom{0}2.30
    & 24.82 $\pm$ \phantom{0}4.35 & 23.03 $\pm$ 13.19
    &  31.79 $\pm$ 20.11&  35.32 $\pm$ 18.72&  30.76 $\pm$ \phantom{0}5.40&  28.40 $\pm$ \phantom{0}6.25\\
    \method{H2GCN} 
    & 23.60 $\pm$ \phantom{0}4.64 & 21.90 $\pm$ \phantom{0}2.15
    & 25.89 $\pm$ \phantom{0}4.96 & 32.77 $\pm$ 14.88
    & 29.60 $\pm$ \phantom{0}6.99 & 37.27 $\pm$ \phantom{0}8.73
    & 26.98 $\pm$ \phantom{0}6.25 & 34.58 $\pm$ \phantom{0}9.43
\\
    \method{FAGCN} 
    & 35.03 $\pm$ 17.92 & 20.91 $\pm$ \phantom{0}1.79
    & 22.71 $\pm$ \phantom{0}3.74 & 28.67 $\pm$ 17.64
    & 32.63 $\pm$ \phantom{0}9.94 & 35.87 $\pm$ 13.47
    & 26.46 $\pm$ \phantom{0}6.34 & 28.28 $\pm$ \phantom{0}9.57
\\\midrule
    \method{DGI}
    & 28.04 $\pm$ \phantom{0}6.47 & 20.00 $\pm$ \phantom{0}1.86
    & 19.33 $\pm$ \phantom{0}4.57 & 32.54 $\pm$ 15.66
    & 45.22 $\pm$ 11.09& 48.05 $\pm$ 14.83
    & 45.00 $\pm$ \phantom{0}9.19& 54.11 $\pm$ \phantom{0}9.60\\
    \method{GraphCL}
    & 29.85 $\pm$ \phantom{0}8.46 & 21.42 $\pm$ \phantom{0}2.22
    & 27.16 $\pm$ \phantom{0}4.31 & 24.69 $\pm$ 14.06
    & 46.15 $\pm$ 10.94 & 48.88 $\pm$ 15.98 
    & 43.12 $\pm$ \phantom{0}9.61& 51.96 $\pm$ \phantom{0}9.43
    % \\
    % \method{GPT-GNN} 
    % &  $\pm$  &  $\pm$ 
    % &  $\pm$  &  $\pm$ 
    % &  $\pm$  &  $\pm$ 
    % &  $\pm$  &  $\pm$ 
\\
    \method{DSSL}
    & 28.46 $\pm$ 10.31 & 20.94 $\pm$ \phantom{0}1.88
    & \underline{27.92} $\pm$ \phantom{0}3.93& 20.36 $\pm$ \phantom{0}5.38
    & 40.42 $\pm$ 10.08 & \underline{66.59} $\pm$ 19.28
    & 39.86 $\pm$ \phantom{0}8.60 & 40.79 $\pm$ \phantom{0}7.31
\\
    \method{GraphACL}
    & \underline{34.57} $\pm$ 10.46 & \underline{24.44} $\pm$ \phantom{0}3.94& 26.72 $\pm$ \phantom{0}4.67 & \underline{33.17} $\pm$ 16.06& 42.16 $\pm$ 13.50 & 47.57 $\pm$ 14.36
    & 35.91 $\pm$ \phantom{0}7.87 & 46.65 $\pm$ \phantom{0}9.54
\\\midrule 
    \method{GPPT}
    & 27.39 $\pm$ \phantom{0}6.67 & 20.09 $\pm$ \phantom{0}0.91
    & 24.53 $\pm$ \phantom{0}2.55 & 25.09 $\pm$ \phantom{0}2.92
    &  35.15 $\pm$ 11.40 &  35.37 $\pm$ \phantom{0}9.37&  21.45 $\pm$ \phantom{0}3.45 &  15.37 $\pm$ \phantom{0}4.51\\
    \method{GraphPrompt}
    & 31.48 $\pm$ \phantom{0}5.18 & 21.22 $\pm$ \phantom{0}1.80
    & 25.36 $\pm$ \phantom{0}3.99 & 31.00 $\pm$ 13.88
    & \underline{47.22} $\pm$ 11.05& 53.54 $\pm$ 15.46& \underline{45.34} $\pm$ 10.53& \underline{54.25} $\pm$ \phantom{0}9.38\\
    \method{GraphPrompt+}
    & 31.54 $\pm$ \phantom{0}4.54 & 21.24 $\pm$ \phantom{0}1.82
    & 25.73 $\pm$ \phantom{0}4.50 & 31.65 $\pm$ 14.48
    & 46.08 $\pm$ \phantom{0}9.96 & 57.68 $\pm$ 13.12
    & 45.23 $\pm$ 10.01 & 52.51 $\pm$ \phantom{0}9.73
\\\midrule
    \method{\model}
    & \textbf{44.72} $\pm$ 11.93& \textbf{24.59} $\pm$ \phantom{0}3.41& \textbf{30.67} $\pm$ \phantom{0}3.73& \textbf{37.90} $\pm$ \phantom{0}9.31& \textbf{48.95} $\pm$ 10.85& \textbf{72.94} $\pm$ 20.23& \textbf{49.02} $\pm$ 10.66& \textbf{57.92} $\pm$ 11.50\\    \bottomrule
        \end{tabular}}
        \\
   \parbox{1\textwidth}{\footnotesize Results are reported in percent. The best method is bolded and the runner-up is underlined.}
\end{table*}

\begin{table*}[tbp]
    \centering
    \small
     %\addtolength{\tabcolsep}{0.5mm}
    \caption{Accuracy evaluation on one-shot graph classification. 
    }
    \vspace{-2mm}
    \label{table.graph-classification}%
    \resizebox{1\linewidth}{!}{%
    \begin{tabular}{@{}l|c|c|c|c|c|c|c|c@{}}
    \toprule
   Methods & Wisconsin & Squirrel & Chameleon & Cornell & PROTEINS & ENZYMES & BZR & COX2 \\\midrule\midrule
    \method{GCN} 
    & 21.39 $\pm$ \phantom{0}6.56 & 11.77 $\pm$ \phantom{0}3.10
    & 17.21 $\pm$ \phantom{0}4.80 & 26.36 $\pm$ \phantom{0}4.35
    & 51.66 $\pm$ 10.87 & 19.30 $\pm$ \phantom{0}6.36
    & 45.06 $\pm$ 16.30 & 43.84 $\pm$ 13.94
\\ 
    \method{GAT} 
    & 24.93 $\pm$ \phantom{0}7.59 & 20.70 $\pm$ \phantom{0}1.51
    & 25.71 $\pm$ \phantom{0}3.32 & 22.66 $\pm$ 12.46
    & 51.33 $\pm$ 11.02 & 20.24 $\pm$ \phantom{0}6.39
    & 46.28 $\pm$ 15.26 & 51.72 $\pm$ 13.70
\\
    \method{H2GCN}
    & 22.23 $\pm$ \phantom{0}6.38 & 20.69 $\pm$ \phantom{0}1.42
    & \underline{26.76} $\pm$ \phantom{0}3.98 & 23.11 $\pm$ 11.78
    & 53.81 $\pm$ \phantom{0}8.85 & 19.40 $\pm$ \phantom{0}5.57
    & 50.28 $\pm$ 12.13 & 53.70 $\pm$ 11.73
\\
    \method{FAGCN}
    & 23.81 $\pm$ \phantom{0}9.50 & 20.83 $\pm$ \phantom{0}1.43
    & 25.93 $\pm$ \phantom{0}4.03 & 25.71 $\pm$ 13.12
    & 55.45 $\pm$ 11.57 & 19.95 $\pm$ \phantom{0}5.94
    & {50.93} $\pm$ 12.41 & 50.22 $\pm$ 11.50
\\\midrule
    \method{DGI}
    & \underline{29.77} $\pm$ \phantom{0}6.22 & 20.50 $\pm$ \phantom{0}1.52
    & 24.29 $\pm$ \phantom{0}4.33 & 18.60 $\pm$ 12.79
    & 50.32 $\pm$ 13.47 & 21.57 $\pm$ \phantom{0}5.37
    & 49.97 $\pm$ 12.63 & 54.84 $\pm$ 14.76
\\
    \method{GraphCL}
    & 27.93 $\pm$ \phantom{0}5.27 & \underline{21.01} $\pm$ \phantom{0}1.86
    & 26.45 $\pm$ \phantom{0}4.30 & 20.03 $\pm$ 10.05
    & 54.81 $\pm$ 11.44 & 19.93 $\pm$ \phantom{0}5.65
    & 50.50 $\pm$ 18.62 & 47.64 $\pm$ 22.42
% \\
    % \method{GPT-GNN}
    % &  $\pm$  &  $\pm$ 
    % &  $\pm$  &  $\pm$ 
    % &  $\pm$  &  $\pm$ 
    % &  $\pm$  &  $\pm$ 
\\
    \method{DSSL}
    & 22.05 $\pm$ \phantom{0}3.90 & 20.74 $\pm$ \phantom{0}1.61
    & 26.19 $\pm$ \phantom{0}3.72 & 18.38 $\pm$ 10.63
    & 52.73 $\pm$ 10.98 & \textbf{23.14} $\pm$ \phantom{0}6.71
    & 49.04 $\pm$ \phantom{0}8.75 & 54.23 $\pm$  14.17
\\
    \method{GraphACL}
    & 22.98 $\pm$ \phantom{0}5.89 & 20.80 $\pm$ \phantom{0}1.28
    & 26.28 $\pm$ \phantom{0}3.93 & \underline{26.50} $\pm$ 17.18
    & \textbf{56.11} $\pm$ 13.95 & 20.28 $\pm$ \phantom{0}5.60
    & 49.24 $\pm$ 17.87 & 49.59 $\pm$ 23.93
\\\midrule 
    \method{GraphPrompt}
    & 28.34 $\pm$ \phantom{0}3.89 & \textbf{21.22} $\pm$ \phantom{0}1.80
    & 26.51 $\pm$ \phantom{0}4.67 &  24.06 $\pm$ 13.71 
    & 53.61 $\pm$ \phantom{0}8.90 & 21.85 $\pm$ \phantom{0}6.17
    & 50.46 $\pm$ 11.46  & \underline{55.01} $\pm$ 15.23
\\
    \method{GraphPrompt+} 
    & 26.95 $\pm$ \phantom{0}7.42 & 20.80 $\pm$ \phantom{0}1.45 & 26.03 $\pm$ \phantom{0}4.17 &
    25.31 $\pm$ \phantom{0}7.65& 54.55 $\pm$ 12.61 & 21.85 $\pm$ \phantom{0}5.15
    & \textbf{53.26} $\pm$ 14.99 & 54.73 $\pm$ 14.58
\\\midrule
    \method{\model}
    & \textbf{31.54} $\pm$ \phantom{0}5.30 & 20.92 $\pm$ \phantom{0}1.37
    & \textbf{28.50} $\pm$ \phantom{0}5.30 & \textbf{27.17} $\pm$ \phantom{0}9.58
    & \textbf{56.11} $\pm$ 10.19 & \underline{22.55} $\pm$ \phantom{0}6.70
    & \underline{51.62} $\pm$ 14.27 & \textbf{56.46} $\pm$ 14.57
\\    \bottomrule
        \end{tabular}}
        \\
   % \parbox{1\textwidth}{\footnotesize Results are reported in percent. The best method is bolded and the runner-up is underlined.}
\end{table*}

\subsection{Experimental Setup}
\stitle{Datasets.}
We conduct experiments on ten benchmark datasets.
\textit{Wisconsin} \cite{pei2020geom}, \textit{Cornell} \cite{pei2020geom}, \textit{Chameleon} \cite{rozemberczki2021multi}, and \textit{Squirrel} \cite{rozemberczki2021multi} are all web graphs. Each dataset features a single graph, where nodes correspond to web pages and edges represent hyperlinks connecting these pages.
\textit{Cora} \cite{mccallum2000automating} and \textit{Citeseer} \cite{sen2008collective} are citation networks. They consist of a single graph each, with nodes representing academic papers and edges indicating citation relationships.
\textit{PROTEINS} \cite{borgwardt2005protein} consists of a series of protein graphs. Nodes in these graphs denote secondary structures, while edges depict neighboring relationships either within the amino acid sequence or in three-dimensional space.
\textit{ENZYMES} \cite{wang2022faith}, \textit{BZR} \cite{nr}, and \textit{COX2} \cite{nr} are collections of molecular graphs. These datasets describe enzyme structures from the BRENDA enzyme database, ligands related to benzodiazepine receptors, and cyclooxygenase-2 inhibitors, respectively.
We summarize these datasets in Table~\ref{table.datasets}, Appendix~\ref{app.dataset}.

\stitle{Baselines.}
We evaluate \model\ against a series of state-of-the-art methods from the following three categories.
(1) \emph{End-to-end GNNs}: GCN \cite{kipf2016semi}, GAT \cite{velivckovic2017graph}, H2GCN \cite{zhu2020beyond}, and FAGCN \cite{bo2021beyond} are trained in a supervised manner directly using downstream labels. Specifically, GCN and GAT are designed for homophilic graphs, whereas H2GCN is developed for heterophilic graphs, and FAGCN for non-homophilic graphs.
(2) \emph{Graph pre-training models}: DGI \cite{velickovic2019deep}, GraphCL \cite{you2020graph}, DSSL \cite{xiao2022decoupled}, and GraphACL \cite{xiao2024simple} follow the ``pre-train, fine-tune'' paradigm. %Specifically, they are non-homophily methods.
(3) \textit{Graph prompt learning models}:  GPPT \cite{sun2022gppt}, GraphPrompt \cite{liu2023graphprompt}, and GraphPrompt+ \cite{yu2024generalized} employ self-supervised pre-training tasks, and use the same prompts for all nodes in downstream adaptation. %GraphPrompt and GraphPrompt+ are homophily methods. 
Note that GPPT is specifically designed for node classification and cannot be directly used for graph classification. Therefore, we evaluate GPPT on node classification tasks only.

Note that some graph few-shot learning methods, such as Meta-GNN \cite{zhou2019meta}, AMM-GNN \cite{wang2020graph}, RALE \cite{liu2021relative}, VNT \cite{tan2023virtual}, and ProG \cite{sun2023all}, are based on the meta-learning paradigm \cite{finn2017model}. They require a set of labeled base classes in addition to the few-shot classes, and thus are not compared here.

\stitle{Parameter settings.}
For all baselines, we use the original authors' code and reference their recommended settings, while further tuning their hyperparameters to ensure optimal performance. Detailed descriptions of the implementations and settings for both the baselines and our \model\ are provided in Appendix~\ref{app.dataset}.

\stitle{Downstream tasks and evaluation.}
We conduct two types of downstream task: \textit{node classification}, and \textit{graph classification}. These tasks are set up as $k$-shot classification problems, meaning that for each class, $k$ instances (nodes or graphs) are randomly selected for supervision.
The low-homophily datasets, \ie, \textit{Wisconsin}, \textit{Squirrel}, \textit{Chameleon} and \textit{Cornell}, only comprise a single graph and cannot be directly used for graph classification. Thus, following previous research \cite{lu2021learning,yu2023hgprompt}, we generate multiple graphs by constructing ego-networks centered on the labeled nodes in each dataset. 
We then perform graph classification on these ego-networks, each labeled according to its ego node. Among datasets with high homophily ratios, \textit{PROTEINS}, \textit{ENZYMES}, \textit{BZR} and \textit{COX2} have ground-truth graph labels, which we employ directly for graph classification. 

Since the $k$-shot tasks are balanced classification problems, we use accuracy to evaluate the performance, in line with prior studies \cite{liu2023graphprompt, wang2020graph, liu2021relative, yu2024generalized}. We pre-train the graph encoder once for each dataset and then use the same pre-trained model for all downstream tasks.
We generate 100 $k$-shot tasks for both node classification and graph classification by repeating the sampling process 100 times. Each task is executed with five different random seeds, leading to a total of 500 results per task type. We report the mean and standard deviation of these 500 outcomes.

\begin{table*}[tb]
    \centering
    \small
    \addtolength{\tabcolsep}{-1mm}
    \caption{Ablation study on the effects of key components.}
    \vspace{-2mm}
    \label{table.ablation}%
    \resizebox{1\linewidth}{!}{%
    \begin{tabular}{@{}l|cccccc|cccccc@{}}
    \toprule
    \multirow{2}*{Methods} &\multicolumn{6}{c|}{Node classification} &\multicolumn{6}{c}{Graph classification}\\
    & Wisconsin & Squirrel & Chameleon & PROTEINS & ENZYMES & Citeseer  
    & Wisconsin & Squirrel & Chameleon & PROTEINS & ENZYMES & COX2\\
    \midrule\midrule
    \method{NoPrompt} %no prompt
    &25.41\text{\scriptsize ±\ 3.13}  &20.60\text{\scriptsize ±1.30} &22.71\text{\scriptsize ±3.54 } 
    &47.22\text{\scriptsize ±11.05} &66.59\text{\scriptsize ±19.28 } &43.12\text{\scriptsize ±\ 9.61}  
    &20.85\text{\scriptsize ±6.74} &20.18\text{\scriptsize ±1.30} &22.34\text{\scriptsize ±4.15} 
    &53.61\text{\scriptsize ±\ 8.90} &21.85\text{\scriptsize ±6.17}  &54.29\text{\scriptsize ±17.31}
\\  
    \method{SinglePrompt} %single prompt
    &32.76\text{\scriptsize ±\ 5.21} &20.85\text{\scriptsize ±1.32 } &22.78\text{\scriptsize ±3.35}  
    &30.33\text{\scriptsize ±19.59} &65.32\text{\scriptsize ±21.67} &48.64\text{\scriptsize ±10.09}  
    &25.77\text{\scriptsize ±6.24} &20.68\text{\scriptsize ±0.91} &27.03\text{\scriptsize ±3.98} 
    &56.35\text{\scriptsize ±10.59} &19.38\text{\scriptsize ±7.12}  &47.24\text{\scriptsize ±15.53}
\\  
    \method{NodeCond} % node emb->condition-net
    &35.56\text{\scriptsize ±\ 4.65}
    &21.26\text{\scriptsize ±3.95}
    &21.13\text{\scriptsize ±2.23}
    &36.01\text{\scriptsize ±19.70}&68.54\text{\scriptsize ±19.31}
    &48.30\text{\scriptsize ±10.22}
    &25.30\text{\scriptsize ±4.62}
    &20.98\text{\scriptsize ±1.56}
    &27.24\text{\scriptsize ±5.24}
    &\textbf{56.61}\text{\scriptsize ±10.03}
    &20.70\text{\scriptsize ±6.67}
    &55.92\text{\scriptsize ±14.66}
\\
    \method{\model$\textbackslash$sim} % subgraph emb without similarity->condition-net
    &30.65\text{\scriptsize ±\ 4.05}
    &20.05\text{\scriptsize ±0.59}
    &20.96\text{\scriptsize ±4.21}
    &33.73\text{\scriptsize ±17.82}&36.02\text{\scriptsize ±20.64}
    &48.74\text{\scriptsize ±\ 2.66}
    &22.05\text{\scriptsize ±5.86}
    &19.93\text{\scriptsize ±0.42}
    &20.20\text{\scriptsize ±1.11}
    &52.30\text{\scriptsize ±10.94}
    &16.70\text{\scriptsize ±1.28}
    &50.05\text{\scriptsize ±17.67}
\\
    \method{\model} 
    & \textbf{44.72}\text{\scriptsize ±11.93} &\textbf{24.59}\text{\scriptsize ±3.41} &\textbf{30.67}\text{\scriptsize ±3.73}  &\textbf{48.95}\text{\scriptsize ±10.85} &\textbf{72.94}\text{\scriptsize ±20.23} &\textbf{49.02}\text{\scriptsize ±10.66}  
    &\textbf{31.54}\text{\scriptsize ±5.30} &\textbf{20.92}\text{\scriptsize ±1.37} &\textbf{28.50}\text{\scriptsize ±5.30}  
    &56.11\text{\scriptsize ±10.19} &\textbf{22.55}\text{\scriptsize ±6.70} &\textbf{56.46}\text{\scriptsize ±14.57} 	 
\\  
    \bottomrule
    \end{tabular}}
\end{table*}

\begin{table*}[tb]
    \centering
    \small
   % \addtolength{\tabcolsep}{-1mm}
    \caption{Comparison between homophily and non-homophily tasks in pre-training.}
    \vspace{-2mm}
    \label{table.pre-train}%
    %\resizebox{0.8\linewidth}{!}{%
    \begin{tabular}{@{}l|cccc|cccc@{}}
    \toprule
    \multirow{3}*{Pre-training task} &\multicolumn{4}{c|}{Node classification} &\multicolumn{4}{c}{Graph classification}\\
    & Wisconsin & Cornell  & PROTEINS & ENZYMES  
    & Wisconsin & Cornell  & PROTEINS & ENZYMES \\
    & 0.21 & 0.30  & 0.66 & 0.67  & 0.21 & 0.30  & 0.66 & 0.67 \\
    \midrule\midrule
    \method{Link Prediction \cite{sun2022gppt}}&23.01\text{±11.40 }   &26.27\text{±\phantom{0}7.61 } 
   &35.88\text{±\phantom{0}5.41} 
    &36.74\text{±\phantom{0}2.61}  &20.96\text{±\phantom{0}4.21} &25.38\text{±\phantom{0}2.50} 
    &51.50\text{±\phantom{0}6.02}  &17.47\text{±\phantom{0}4.04}
\\  
    \method{Link Prediction \cite{liu2023graphprompt}}
    &28.93\text{±11.74 }  &16.29\text{±\phantom{0}7.93 }  
    &\textbf{48.95}\text{±10.85}&\textbf{52.87}\text{±14.73}&23.15\text{±\phantom{0}5.67} &22.05\text{±13.80}
    &\textbf{55.83}\text{±10.87}&\textbf{22.23}\text{±\phantom{0}5.51}\\  
    \method{GraphACL \cite{xiao2024simple} }
     &33.91\text{±\phantom{0}9.04}
     &29.55\text{±12.30} 
       &44.08\text{±10.03}
     &50.57\text{±13.11} 
    &26.42\text{±\phantom{0}7.25} &26.15\text{±\phantom{0}3.87}
    &54.15\text{±10.58}
    &21.64\text{±\phantom{0}5.88}
\\
    \method{GraphCL \cite{you2020graph}}
    &\textbf{44.72}\text{±11.93}&\textbf{37.90}\text{±\phantom{0}9.31}&48.28\text{±11.09} &51.46\text{±13.93} 
    &\textbf{31.54}\text{±\phantom{0}1.37}&\textbf{27.17}\text{±\phantom{0}5.30}&53.91\text{±\phantom{0}5.51} &21.78\text{±12.12}
\\  
    \bottomrule
    \end{tabular}%}
\end{table*}

\subsection{Performance Evaluation}\label{sec.exp.per}
We first evaluate one-shot classification tasks. Then, we vary the number of shots to investigate their impact on performance.

\stitle{One-shot performance.}\label{exp.main}
We present the results of one-shot node and graph classification tasks on non-homophilic graphs in Tables~\ref{table.node-classification} and~\ref{table.graph-classification}, respectively. We make the following observations:
(1) \model\ surpasses the vast majority of baseline methods, outperforming the best competitor by up to 21.49\% on node classification and 6.50\% on graph classification. These results demonstrate its effectiveness in learning prior knowledge from non-homophilic graphs and capturing node-specific patterns.
(2) Other graph prompt learning methods, \ie, GPPT, GraphPrompt, and GraphPrompt+, significantly lag behind \model. Their suboptimal performance can be attributed to their inability to account for a variety of node-specific patterns. These results underscore the importance of our conditional prompting in characterizing node embeddings to capture the unique pattern of each node.
(3) GPPT is at best comparable to, and often performs worse than other baselines because it is not well suited to few-shot learning.

\stitle{Few-shot performance.}
To assess the performance of \model\ with different amounts of labeled data, we vary the number of shots in the node classification tasks. We present the results in Fig.~\ref{fig.fewshot-nc} with several competitive baselines for selected datasets. Note that given the limited number of nodes in \textit{Wisconsin}, we only conduct tasks up to $3$ shots.
We observe that
 \model\ generally outperforms these baselines in low-shot scenarios (\( k \leq 5 \)) by a significant margin, showcasing the effectiveness of our approach with limited labeled data.
Furthermore, as the number of shots increases, while all methods generally show improved performance, \model\ remains competitive and often surpasses the other methods. 
We focus on the one-shot setting for the remaining experimental results.

\begin{figure}[t]
\centering
\includegraphics[width=1\linewidth]{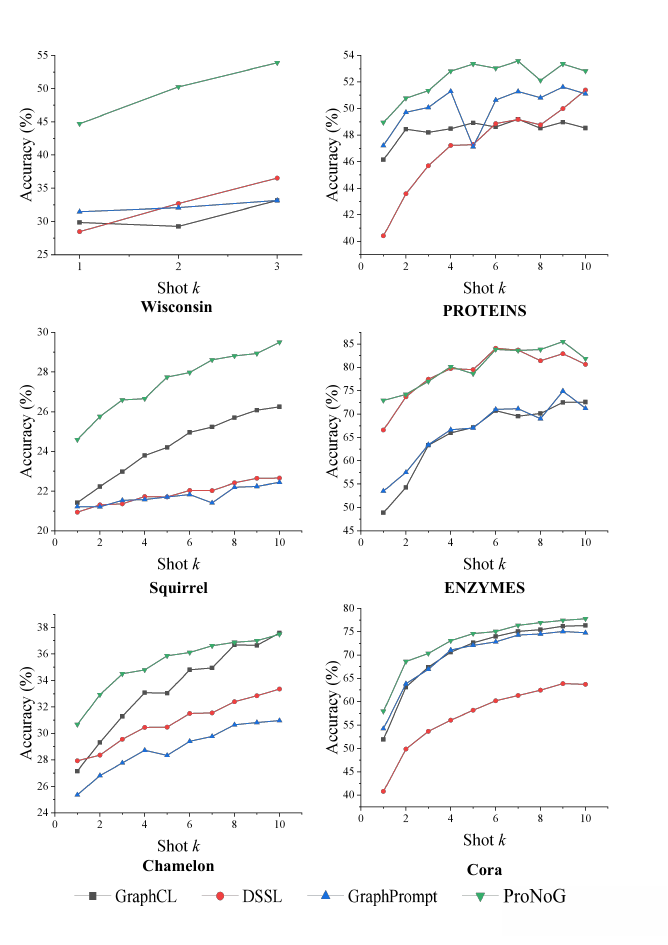}
\vspace{-5mm}
\caption{Impacts of different shots on node classification.}
\label{fig.fewshot-nc}
\end{figure}

\begin{figure}[t]
\centering
\includegraphics[width=1\linewidth]{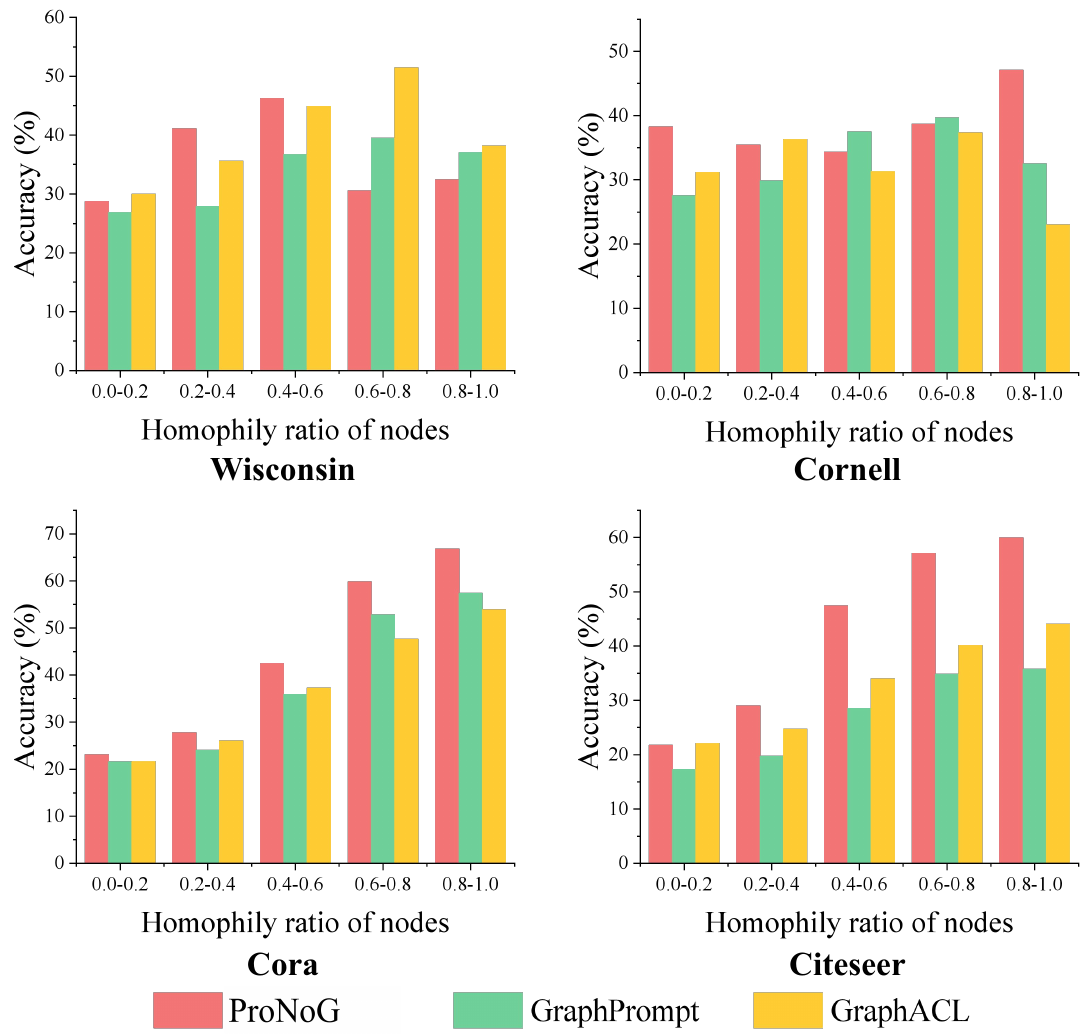}
\vspace{-5mm}
\caption{Results across nodes with varying homophily ratios.}
\label{fig.node-homophily-ratio}
\end{figure}

\subsection{Ablation Study}
To comprehensively understand the influence of conditional prompt learning in \model, we perform an ablation study comparing \model\ with four of its variants: (1) \method{NoPrompt} replaces conditional prompt learning with a classifier for downstream tasks; (2) \method{SinglePrompt} uses a single prompt instead of conditional prompts to modify all nodes; (3) \method{NodeCond} directly uses the output embedding of a node from the pre-trained graph encoder as input to the condition-net to generate the prompt, without reading out the subgraph in Eq.~\eqref{eq.readout}; (4) \method{\model$\textbackslash$sim} reads out the subgraph via a mean pooling without similarity weighting between the ego nodes and their neighbors as in Eq.~\eqref{eq.readout}.

As shown in Table~\ref{table.ablation}, \model\ consistently outperforms these variants in all but one instance, in which its performance is still competitive. This highlights the necessity of reading out subgraphs with similarity weighting in order to capture the characteristics of each node, and the advantage of using conditional prompt learning to adapt to each node.

\subsection{Analysis on Pre-Training Methods}
To investigate the effect of homophily and non-homophily tasks on pre-training, we employ two forms of link prediction from GPPT \cite{sun2022gppt} and GraphPrompt \cite{liu2023graphprompt} as the homophily tasks, as well as GraphCL \cite{you2020graph} and GraphACL \cite{xiao2024simple} as non-homophily tasks. 
%Note that link predicition in GPPT \cite{sun2022gppt} is in a generative format, falling beyond the scope of homophily task, but it's also affected by non-homophily in graphs.
To isolate their effects, we apply the same conditional prompt learning from \model\ for downstream adaptation.

We present the comparison in Table~\ref{table.pre-train}. It can be observed that, for graphs with lower homophily ratios (\ie, \textit{Wisconsin} and \textit{Cornell}), non-homophily tasks significantly outperform the homophily tasks. Conversely, for graphs with higher homophily ratios (\ie, \textit{PROTEINS} and \textit{ENZYMES}), the performance of homophily and non-homophily tasks becomes more comparable. While the homophily tasks with link prediction may have a slight advantage on highly homophilic graphs, non-homophily tasks are competitive across both homophilic and non-homophilic graphs. Hence, non-homophily tasks provide a more robust solution overall. %Further demonstrating the correctness of Theorem~\ref{theorem.loss} \& ~\ref{theorem.sample}.
% This can be attributed to the fact that for low homophily ratio graphs, the number $\mathbb{N}$ of homophily samples is lower, while for high homophily ratio graphs, it is higher. However, for the non-homophily method, the lower bound error is not related to the graph's homophily ratio. This further demonstrates the advantages of non-homophily methods over strong-homophily methods, especially for graphs with a low homophily ratio, and validates the correctness of Theorem~\ref{theorem.1}.

\subsection{Analysis on Diverse Node Patterns}
To evaluate the ability of \model\ in capturing node-specific patterns, we investigate the classification accuracy of different node groups with varying homophily ratios, \ie, $[0.0,0.2)$, $[0.2,0.4)$, $[0.4,0.6)$, $[0.6,0.8)$, $[0.8,1.0]$. In each node group, we compare the performance of \model\ with several competitive baselines. 

As shown in Fig.~\ref{fig.node-homophily-ratio}, \model\ outperforms the baselines across various node groups, regardless of their unique characteristics reflected in different homophily ratios. These consistent improvements in all groups further demonstrate the effectiveness of \model\ in capturing diverse node patterns and highlight the advantage of our proposed conditional prompt learning.

% \subsection{Hyperparameter Analysis}
% In our experiment, we use a 2-layer MLP with a bottleneck structure as the condition-net. We evaluate the impact of the hidden dimension of the condition-net $m$ and report the corresponding performance in Fig.~\ref{fig.hyperpara} and Appendix~\ref{app.exp.fewshot}. 
% We observe that for both node and graph classification, as $m$ increases from 2, the performance generally first decreases because a larger $m$ introduces more learnable parameters, which may lead to worse performance in few-shot settings. However, after reaching a trough, accuracy starts to gradually increase as $m$ grows further, since higher dimensions increase model capacity, until reaching a peak. Then the performance further declines as $m$ improves, given more learnable parameters. Note that the overall variation in performance is generally small, and the peak is generally at $m=2$ or $m=64$. In our experiment, we set $m=64$ in our experiments.

% \begin{figure}[t]
% \centering
% \includegraphics[width=1\linewidth]{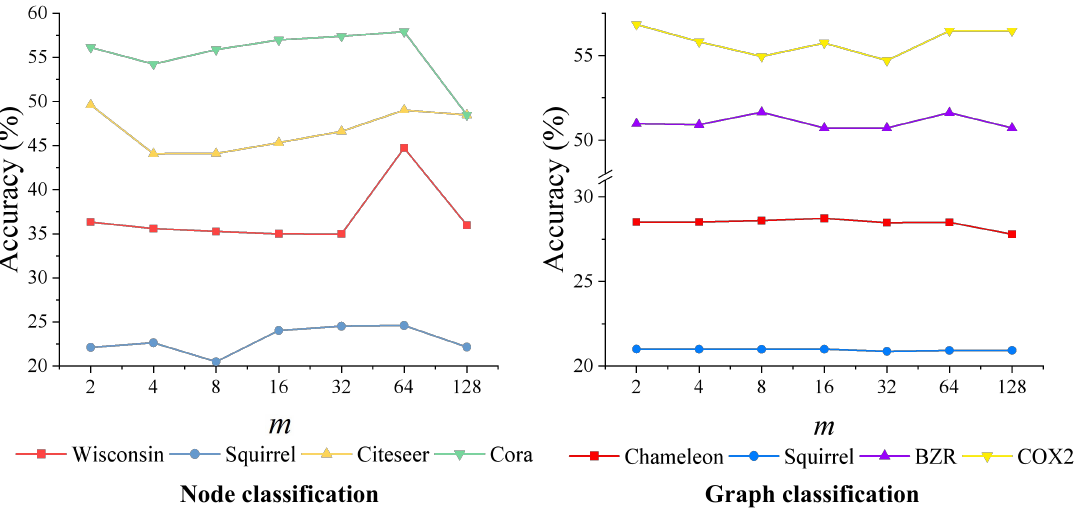}
% \caption{Sensitivity study of $m$.}
% \label{fig.hyperpara}
% \end{figure}

\section{Conclusions}
In this paper, we explored pre-training and prompt learning on non-homophilic graphs. The goals are twofold: learning comprehensive knowledge irrespective of the varying non-homophilic patterns of graphs, and adapting the nodes with diverse distributions of non-homophily patterns to downstream tasks in a fine-grained, node-wise manner. We first revisit graph pre-training on non-homophilic graphs, providing theoretical insights into the choice of pre-training tasks. Then, for downstream adaptation, we proposed a condition-net to generate a series of prompts conditioned on node-specific non-homophilic patterns. Finally, we conducted extensive experiments on ten public datasets, showing that \model\ significantly outperforms diverse state-of-the-art baselines.

%\clearpage
%\newpage

\section*{Acknowledgments}
This research / project is supported by the Ministry of Education, Singapore, under its Academic Research Fund Tier 2 (Proposal ID: T2EP20122-0041). Any opinions, findings and conclusions or recommendations expressed in this material are those of the author(s) and do not reflect the views of the Ministry of Education, Singapore. 

\bibliographystyle{ACM-Reference-Format}
\bibliography{references}

% \clearpage
% \newpage
% \newpage
%%
%% If your work has an appendix, this is the place to put it.
\appendix
\section*{Appendices}
\renewcommand\thesubsection{\Alph{subsection}}
\renewcommand\thesubsubsection{\thesubsection.\arabic{subsection}}
\subsection{Algorithm}\label{app.alg}
We detail the main steps for conditional prompt generation and tuning in Algorithm~\ref{alg.prompt}. In brief, we iterate through each downstream task to learn the corresponding prompt vectors. In lines 3--5, we compute the embedding for each node using the pre-trained graph encoder, with the pre-trained weights \( \Theta_0 \) frozen throughout the adaptation process.
In lines 6--22, we optimize the condition-net. Specifically, we perform similarity-weighted readout (lines 9--11), generate prompts (lines 12--13), modify nodes' embeddings using these prompts (lines 14--15), calculate the embedding of the corresponding graph (line 16), and update the embeddings for the prototypical nodes/graphs based on the few-shot labeled data provided in the task (lines 17--19).
%Note that updating prototypical nodes/graphs is necessary only for classification tasks.

\subsection{Homophily and Non-Homophily Methods}\label{sec.app.homo-task}
We provide further details about the set of positive samples $\mathcal{A}$ and negative samples $\mathcal{B}$ for various homophily and non-homophily methods in Table~\ref{table.pre-train-method}.

\begin{table*}[tbp]
\centering
\caption{Positive and negative samples for homophily and non-homophily contrastive methods. } \label{table.pre-train-method}
\vspace{-2mm}
\small
%\resizebox{0.63\linewidth}{!}{
\begin{tabular}{c|c|c|c}
\toprule
 Pre-training task  & {Positive instances $\mathcal{A}_u$} &{Negative instances $\mathcal{B}_u$} & Homophily task\\
\midrule
     Link prediction \cite{liu2023graphprompt,yu2023hgprompt,yu2024generalized}
     & a node connected to node $u$
     & nodes disconnected to node $u$
     & Yes
     \\ 
     DGI \cite{velickovic2019deep}
     & nodes in graph $G$
     & nodes in corrupted graph $G'$
     & No
     \\ 
     {GraphCL \cite{you2020graph}} 
     & an augmented graph from graph $G$
     & augmented graphs from $G'\ne G$
     & No
     \\
     % DSSL \cite{xiao2022decoupled}
     % &
     % &
     % \\
     GraphACL \cite{xiao2024simple}
     & nodes with similar ego-subgraph to node $u$
     & nodes with dissimilar ego-subgraph to node $u$
     & No
     \\
 \bottomrule
\end{tabular}%}
\end{table*}

\begin{table}[tbp]
\center
\small
\addtolength{\tabcolsep}{-1mm}
\caption{Summary of datasets. 
\label{table.datasets}}
\vspace{-2mm}
\resizebox{1\columnwidth}{!}{%
\begin{tabular}{@{}c|rrrrrrr@{}}
\toprule
    & \makecell[c]{Graphs} &\makecell[c]{Homophily \\ ratio} & \makecell[c]{Graph \\ classes} & \makecell[c]{Avg.\\ nodes} & \makecell[c]{Avg. \\ edges} & \makecell[c]{Node \\ features} & \makecell[c]{Node \\ classes} \\
\midrule
     Wisconsin & 1 & 0.21 & - & 251 & 199 & 1,703 & 5 \\
     Squirrel & 1 & 0.22 & - & 5,201 & 217,073 & 2,089 & 5 \\
     Chameleon & 1 & 0.23 &- & 2,277 & 36,101 & 2,325 & 5 \\
     Cornell & 1 & 0.30 & - & 183 & 295 & 1,703 & 5 \\
     PROTEINS & 1,113 & 0.66 & 2 & 39.06 & 72.82 & 1 & 3 \\
     ENZYMES & 600 & 0.67 & 6 & 32.63 & 62.14 & 18 & 3 \\
     Citeseer & 1 & 0.74 & - & 3,327 & 4,732 & 3,703 & 6 \\ 
     Cora & 1 & 0.81 & - & 2,708 & 5,429 & 1,433 & 7 \\ 
     BZR & 405 & - & 2 & 35.75 & 38.36 & 3 & - \\
     COX2 & 467 & - & 2 & 41.22 & 43.45 & 3 & - \\
 \bottomrule
\end{tabular}}
\parbox{1\columnwidth}{\raggedright \footnotesize Homophily ratios are calculated by Eq.~\eqref{eq.graph-level-edge-homophily-ratio}. Note that \textit{BZR} and \textit{COX2} do not have any node label, and thus no homophily ratios can be calculated.
}
\end{table}

\subsection{Further Experimental Details} \label{app.dataset}
%We conduct experiments on ten benchmark datasets. 
\stitle{Datasets.} We summarize the statistics of the ten datasets used in our experiments in Table~\ref{table.datasets}.

\begin{algorithm}[tbp]
\small
\caption{\textsc{Conditional Prompt Learning for \model}}
\label{alg.prompt}
\begin{algorithmic}[1]
    \Require Pre-trained graph encoder with parameters $\Theta_0$, a set of downstream tasks $\mathcal{T}=\{t_1,\ldots,t_n\}$.
    \Ensure Optimized parameters $\{\phi_{t_1},\ldots,\phi_{t_n}\}$ of $n$ condition-nets
    \For{$i\leftarrow 1$ to $n$}
        \State \slash* Encoding graphs via pre-trained graph encoder *\slash
        \For{each graph $G=(V, E,\vec{X})$ in task $t_i$}
            \State $\vec{H}\leftarrow \textsc{GraphEncoder}(G;\Theta_0)$
            \State $\vec{h}_v \leftarrow \vec{H}[v]$, where $v$ is a node in $G$
        \EndFor
        \State $\phi_i \leftarrow$ initialization
        \While{not converged} 
            \For{each node $v \in V$ in task $t_i$}
                \State \slash* Subgraph sampling and readout by Eq.~\eqref{eq.readout} *\slash
                \State Sample $v$'s $k$-hop subgraph $S_v$
                \State $\vec{s}_{v} \leftarrow \textsc{Average}(\{\vec{h}_u \cdot \mathtt{sim}(\vec{h}_u,\vec{h}_v) : u \in V(S_v)\})$
                \State \slash* Generate pattern-based prompts by Eq.~\eqref{eq.prompt-generation} *\slash
                \State $\vec{p}_{t_i,v} \leftarrow \textsc{CondNet}(\vec{s}_{v};\phi_{t_i})$
                \State \slash* Prompt modification by Eq.~\eqref{eq.prompt} *\slash
                \State $\tilde{\vec{h}}_{t_i,v} \leftarrow \vec{p}_{t_i,v} \odot \vec{h}_{v}$
            \EndFor
            \State $\vec{h}_{t_i,G} = \textsc{Average}(\tilde{\vec{h}}_{t_i,v} : v \in \mathcal{V})$
            \State \slash* Update prototypical subgraphs *\slash
            \For{each class $c$ in task $t_i$} 
                \State ${\vec{\bar{h}}}_{t_i,c} \leftarrow \textsc{Average}(\vec{\tilde{h}}_{t_i,x}$: instance $x$ belongs to class $c$)
            \EndFor
            \State \slash* Optimizing the parameters in condition-net *\slash
            \State Calculate $\bL_\text{down}(\phi_i)$ by Eq.~\eqref{eq.prompt-loss}
            \State Update $\phi_i$ by backpropagating  $\bL_\text{down}(\phi_{t_i})$
        \EndWhile    
    \EndFor
    \State \Return $\{\phi_{t_1},\ldots,\phi_{t_n}\}$
\end{algorithmic}
\end{algorithm}

\stitle{Details of baselines.}
We use the authors' code for all baselines, if available. To ensure a fair comparison, each model is tuned while referencing the settings recommended in their respective publications. We use early stopping strategy in training and set the patience value to 50 steps. The number of training epochs is set to 2,000. 
\begin{itemize}[leftmargin=*]
    \item For GCN \cite{kipf2016semi}, we employ a 3-layer architecture on Wisconsin, Squirrel, Chameleon, Cornell datasets and a 2-layer architecture on Cora, Citeseer, ENZYMES, PROTEINS, COX2, BZR datasets.  The hidden dimension is set to 256. 
    \item For GAT \cite{velivckovic2017graph}, we employ a 2-layer architecture and set the hidden dimension to 256. Additionally, we apply 8 attention heads in the first GAT layer.
    \item For H2GCN \cite{zhu2020beyond}, we employ a 2-layer architecture and set the hidden dimension to 256. 
    \item For {FAGCN} \cite{bo2021beyond}, we employ a 2-layer architecture. The hyper-parameter settings are: eps = 0.3, dropout = 0.5, hidden = 256.  We use \textsc{relu} as the activation function. 
    \item For DGI \cite{velivckovic2017graph}, we utilize a 1-layer GCN as the base model and set the hidden dimension to 256. Additionally, we employ \textsc{prelu} as the activation function.
    \item For GraphCL \cite{you2020graph}, a 1-layer GCN is also employed as its base model, with the hidden dimension set to 256. Specifically, we select edge dropping as the augmentation, with a default augmentation ratio of 0.2.
    \item For {DSSL} \cite{xiao2022decoupled}, we search the hidden dimension in \{64, 256, 2048\}. We report the best performance on PROTEINS and ENZYMES with a hidden size of 64, Cora and Citeseer with 2048, and the rest datasets with 256.
    \item For {GraphACL} \cite{xiao2024simple}, we search the hidden dimension in \{64, 256, 1024, 2048\}. We report the best performance on PROTEINS and ENZYMES with a hidden size of 64, Cora and Citeseer with 2048, and the rest datasets with 256. 
    \item For GPPT \cite{sun2022gppt}, we utilize a 2-layer GraphSAGE as its base model, setting the hidden dimensions to 256. For the GraphSAGE backbone, we employ a mean aggregator.
    \item For GraphPrompt \cite{liu2023graphprompt}, we employ a 3-layer architecture on Wisconsin, Squirrel, Chameleon, and Cornell, and a 2-layer architecture on the rest.  Hidden dimensions are set to 256. We use link prediction as the pre-training task.
    \item For GraphPrompt+ \cite{yu2024generalized}, we employ a 2-layer GCN on Cora, Citeseer, ENZYMES, PROTEINS, COX2, and BZ, and a 3-layer GCN on the rest. Hidden dimensions are set to 256. We use link prediction as the pre-training task.
\end{itemize}

\stitle{Details of \model.}
For our proposed \model, we utilize a 2-layer FAGCN architecture as the backbone for pre-training on the non-homophilic graphs, namely, Wisconsin, Squirrel, Chameleon, and Cornell, with edge-dropping implemented on the subgraph level and hidden dimensions set to 256. For the remaining more homophilic graphs, we employ a 1-layer GCN for pre-training, with the hidden dimensions set to 256, except for PROTEINS and ENZYMES, which use hidden dimensions of 64.
We adopt a non-homophily pre-training task GraphCL \cite{you2020graph} for all datasets except for PROTEINS and ENZYMES.
Specifically, GraphCL does not work well for \model\ on the two datasets, and instead we use link prediction \cite{liu2023graphprompt} and DSSL \cite{xiao2022decoupled}, respectively. Note that the non-homophily task GraphCL still works well on most datasets including all of the non-homophilic graphs. 
For the condition-net, we set the hidden dimension to 64 for all datasets. All experiments are conducted with a random seed of 39. 

\begin{table}[tbp] % [!t]
    \centering
    \small
    % \addtolength{\tabcolsep}{1mm}
    \caption{Comparison of the number of tunable parameters during the downstream adaptation phase. 
    }
    \vspace{-2mm}
    \label{table.parameters-num}%
    %\resizebox{1\linewidth}{!}{
    \begin{tabular}{@{}l|rrrr@{}}
    \toprule
   Methods & {Wisconsin} & {Chameleon} & {Citeseer}& {Cora}\\\midrule
    \method{GCN} 
    & 501,504 & 660,736 & 947,968& 366,848\\     
    \method{FAGCN}
    & 440,654& 601,130& 956,654& 370,994\\
    \method{GraphCL}
    & 1,280& 1,280& 1,536& 1,792\\
    \method{GraphACL}&  1,280&  1,280& 12,288& 14,336\\
    \method{GraphPrompt}
    & 256 & 256 & 256 & 256\\
    \method{GraphPrompt+}
    & 512 & 512 & 512 & 512\\
    \method{\model}
    & 32,768 & 32,768 & 32,768 & 32,768\\\bottomrule
        \end{tabular}%}
\end{table}

% \subsection{Impact of Shots}\label{app.exp.fewshot}
% We vary the number of shots and conduct node classification on \textit{Cornell} and \textit{Citeseer}, and graph classification tasks on \textit{Wisconsin}, \textit{Cornell}, \textit{BZR} and \textit{COX2}. The results are illustrated in Fig.~\ref{app.fig.fewshot}, and we observe same patterns as shown in node classification tasks on other datasets.

\subsection{Parameter Efficiency}\label{app.exp.para}
We evaluate the parameter efficiency of \model\ compared to other notable methods. Specifically, we evaluate the number of parameters that need to be updated or tuned during the downstream adaptation phase, and present the results in Table~\ref{table.parameters-num}.
For GCN and FAGCN, since these models are trained end-to-end, all model weights must be updated, leading to the least parameter efficiency.
In contrast, for GraphCL and GraphACL, only the downstream classifier is fine-tuned, while the pre-trained model weights are frozen, significantly reducing the number of tunable parameters in the downstream phase.
Prompt-based methods GraphPrompt and GraphPrompt+ are the most parameter-efficient, as prompts are lightweight and typically contain fewer parameters than the downstream classifier. %Note that the reported number of parameters for \model\ are based on \( m=2 \), given that \model\ still performs competitively with such hyperparameter setting. 
Although our conditional prompt design requires updating more tunable parameters than GraphPrompt and GraphPrompt+ during downstream adaptation, the increase is minor compared to updating the entire model weights, and thus does not pose a major issue.

\begin{figure}[t]
\centering
\includegraphics[width=1\linewidth]{figures/hyperpara.pdf}
\caption{Impact of hidden dimension $m$ in the condition-net.}
\label{fig.hyperpara}
\end{figure}

\subsection{Hyperparameter Analysis}
In our experiments, we use a 2-layer MLP with a bottleneck structure as the condition-net. We evaluate the impact of the hidden dimension $m$ of the condition-net, and report the corresponding performance in Fig.~\ref{fig.hyperpara}. 
We observe that for both node and graph classification, $m=64$ generally yields optimal performance, which we have adopted in all other experiments. Specifically, smaller values of $m$ may lack sufficient model capacity, while larger $m$ may introduce too many learnable parameters, leading to overfitting in few-shot settings.

\end{document}